\documentclass{article}

\usepackage{microtype}
\usepackage{graphicx}
\usepackage{subcaption}
\usepackage{booktabs} 

\usepackage{hyperref}
\usepackage{nicefrac}

 \usepackage[preprint]{icml2026}

\usepackage{amsmath}
\usepackage{amssymb}
\usepackage{mathtools}
\usepackage{amsthm}
\usepackage[normalem]{ulem}
\usepackage{multirow}
\usepackage{enumitem}
\usepackage{fvextra}
\usepackage[capitalize,noabbrev]{cleveref}

\theoremstyle{plain}
\newtheorem{theorem}{Theorem}[section]
\newtheorem{proposition}[theorem]{Proposition}

\theoremstyle{definition}

\newtheorem{assumption}[theorem]{Assumption}
\theoremstyle{remark}

\usepackage[textsize=tiny]{todonotes}

\usepackage{xcolor}

\icmltitlerunning{Self-Generative Adversarial Fine-Tuning for Large Language Models}

\begin{document}

\twocolumn[
\icmltitle{Self-Generative Adversarial Fine-Tuning for Large Language Models}



\icmlsetsymbol{equal}{*}

\begin{icmlauthorlist}
	\icmlauthor{Shiguang Wu}{thu}
	\icmlauthor{Yaqing Wang}{bimsa}
	\icmlauthor{Quanming Yao}{thu}

\end{icmlauthorlist}

\icmlaffiliation{thu}{Department of Electronic Engineering, Tsinghua University}
\icmlaffiliation{bimsa}{Beijing Institute of Mathematical Sciences and Applications}

\icmlcorrespondingauthor{Quanming Yao}{qyaoaa@tsinghua.edu.cn}

\icmlkeywords{Machine Learning, ICML}

\vskip 0.3in
]



\printAffiliationsAndNotice{}  

\begin{abstract}

Fine-tuning large language models (LLMs) for alignment typically relies on supervised fine-tuning or reinforcement learning from human feedback, both limited by the cost and scarcity of high-quality annotations. Recent self-play and synthetic data approaches reduce this dependence but often rely on heuristic assumptions or ungrounded self-evaluation, which can cause bias accumulation and performance drift. In this paper, we propose Self-Generative Adversarial LLM (SGALM), a unified fine-tuning framework that formulates alignment as a generative adversarial game within a single LLM. SGALM jointly evolves generation and discrimination capabilities without external reward models. Theoretical and empirical results 
demonstrate that SGALM achieves state-of-the-art performance, serves as an effective alignment algorithm and a robust synthetic data engine.
\end{abstract}

\section{Introduction}

Large Language Models (LLMs) have emerged as a paradigm shift in artificial intelligence, exhibiting remarkable general intelligence across diverse capabilities. While pre-training on massive corpora equips these models with a vast reservoir of world knowledge and general instruction-following capacity, this ``jack-of-all-trades'' foundation is often insufficient for domain-specific excellence or alignment with human preferences \cite{roziere2023code,ouyang2022training}. These general models may lack precise adherence to instruction formats, safety protocols, or specialized terminology required for downstream applications. Consequently, adapting LLMs serves as the critical bridge that aligns a model’s broad, generic capabilities with task specialization or human preference.

The standard pipeline for adapting LLMs typically starts with Supervised Fine-Tuning (SFT), often followed by Reinforcement Learning from Human Feedback (RLHF) \cite{bai2022training} or Direct Preference Optimization (DPO) \cite{rafailov2023direct}. While successful, these methods are fundamentally constrained by the scarcity and cost of high-quality human annotations. To address this bottleneck, the field has increasingly turned towards utilizing LLMs to generate their own training signals. This includes generating synthetic data for SFT~\cite{wang2023self,honovich2023unnatural} and more sophisticated self-play mechanisms \cite{gulcehre2023reinforced,xiong2024iterative,chen2024self,yuan2024self,pang2024iterative}, where the model evolves as both the student and the teacher in a closed feedback loop.

However, these self-play methods exhibit structural limitations in how they distinguish high-quality synthetic data from hallucinations. SPIN \cite{chen2024self}, while effective, operates by minimizing the likelihood of model-generated responses while maximizing real responses. This creates a strict dichotomy that implicitly assumes all model generations are inferior to the ground truth. In open-ended reasoning tasks, a model-generated solution may be factually correct but phrased differently than the reference; SPIN will penalize this valid variation. Self-rewarding models \cite{yuan2024self} rely on the LLM’s zero-shot capability to act as a judge. Without continuous grounding in ground-truth data, this ``judge'' can suffer from reward hacking or drift, reinforcing the model’s own biases rather than the true data distribution.

Inspired by the architectural success of Generative Adversarial Networks (GANs) \cite{goodfellow2014generative}, we argue that an LLM's ability to generate realistic data can be improved by attempting to fool a discriminator, while the discriminator is improved by learning to distinguish between real and generated data. Considering the duality of generation and discrimination capacities in general intelligence:
\begin{quote}
	\centering \itshape
	``What I cannot create, I do not understand.'' \\
	--- Richard Feynman
\end{quote}
we propose that generation and discrimination capacities should be evolved as dual capabilities within an LLM. The intuition is that if an artificial intelligence can generate realistic (diverse and correct) samples in a domain, then it becomes an expert in that domain.

We address the non-trivial challenge of applying the GAN framework to discrete text generation and discrimination within an LLM by utilizing in-context learning (ICL) for generation and the output distribution for optimizable discrimination. Furthermore, we generalize the few-shot generation capacity utilized during GAN training to a ready-to-use zero-shot understanding capacity, theoretically grounded by the Bayesian nature of ICL. Thanks to the human-like intelligence and the powerful, consistent input-output spaces (natural language) of contemporary LLMs, a single LLM can serve as both the generator and discriminator in a GAN architecture.

This results in the Self-Generative Adversarial Fine-Tuning LLM (SGALM). Requiring only a pre-trained LLM and a real dataset to align, SGALM plays a GAN-like minimax game with itself, as illustrated in Figure~\ref{fig:illus}, and simultaneously serves two functions:
\textbf{Fine-tuning for alignment}: The resulting model possesses aligned capabilities ready for use, as generation ability overlaps with understanding ability in general intelligence; and
\textbf{Creating a synthetic data engine}: The resulting model can generate and filter to obtain more high-fidelity data from the distribution of the training set, which can be used for further fine-tuning.

We summarize our contributions as follows:
\begin{itemize}[leftmargin=*]
	\item We introduce SGALM, which formulates LLM fine-tuning as a self-contained adversarial game where a single LLM jointly evolves its generation and discrimination capabilities. This reframes self-play fine-tuning from static preference optimization into a principled GAN-style alignment process without external dependency.
	
	\item SGALM realizes the adversarial game using a single shared-parameter LLM, leveraging ICL for diverse and flexible generation and continuous ``Real/Fake'' judgments for discrimination. We provide a theoretical analysis showing that the resulting alternating updates recover the true data distribution at equilibrium, where the few-shot generation capacity also leads to zero-shot understanding.
	
	\item Extensive experiments on GSM8K, ARC-Challenge, and MBPP demonstrate that SGALM consistently outperforms supervised fine-tuning and prior self-play baselines. Moreover, SGALM uniquely exhibits positive scaling behavior as the volume of synthetic data increases, validating its effectiveness as a high-fidelity synthetic data engine that mitigates overfitting and model collapse.
\end{itemize}

\section{Preliminaries}
\label{sec:set}

To learn the true data distribution $p_T$, GAN framework employs two models: 
a generator $G$ defining a distribution $p_G$, and a discriminator $D$ outputting a scalar, engaging in a minimax game:
\begin{align}
\label{eq:gan}
\!\!\!\!\!
\min_{G} \! \max_{D} 
\! J \! \equiv \! 
\mathbb{E}_{z'\sim p_G}[\log(1 \! - \! D(z'))] 
\! + \! \mathbb{E}_{z\sim p_T} \! [\log D(z)].
\!\!\!
\end{align}
The discriminator $D$ is trained to distinguish between real samples $z$ drawn from the data distribution $p_T$
and fake samples $z'$ produced by the generator $G$. 
Simultaneously, the generator $G$ aims to synthesize realistic samples that deceive the discriminator $D$, minimizing the $D$'s ability to differentiate them from real data. 

The minimax game is played by iteratively updating $D$ and $p_G$: in every iteration, $z'$ and $z$ are drawn from $p_G$ and $p_T$ respectively; first $D$ is updated towards $\max_{D} J$; then $G$ is updated towards $\min_{G} J$.
It theoretically drives the generator $p_G$ to recover the true data distribution $p_T$
when the discriminator can no longer distinguish generated samples from real ones.
Due to the complexity of the bi-level optimization problem 
\eqref{eq:gan}, 
the training of GANs typically makes approximation by decompose \eqref{eq:gan} into a discrimination objective $\max_{D} J(D)$, 
and a generation objective $\min_{G} J(G)$, and optimizes them iteratively.

 In computer vision, the adversarial game has been successfully deployed for diverse applications such as domain adaptation \cite{ganin2015unsupervised}, style transfer \cite{zhu2017unpaired}, and data augmentation \cite{antoniou2017data}. In natural language processing, however, prior attempts \cite{yu2017seqgan,che2017maximum,guo2018long} primarily focused on employing Reinforcement Learning to bypass the non-differentiability of discrete tokens and were limited to training small models for text generation. 
 LLMs have not yet witnessed a general and complete generative adversarial game; the proposed SGALM achieves this for fine-tuning for alignment and creating a synthetic data engine.
 Broader related works are discussed in Appendix~\ref{app:rw}.

\begin{figure*}[t]
\centering
\includegraphics[width=0.93\textwidth]{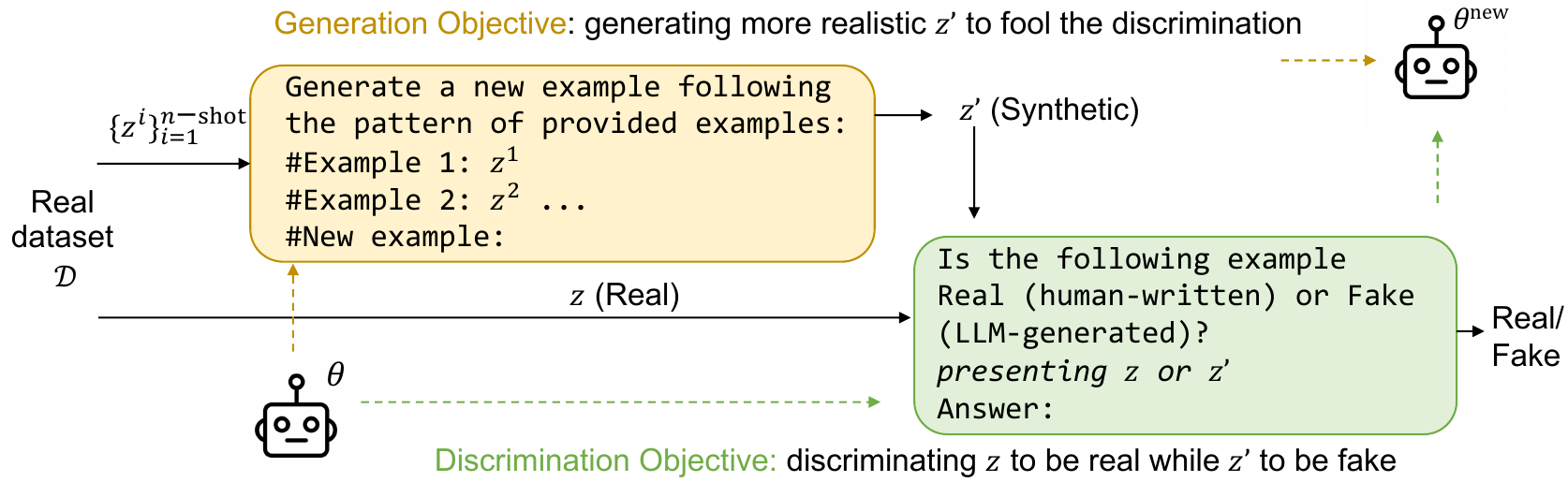}
\caption{Illustration of the proposed self-generative adversarial fine-tuning framework.}
\vspace{-10px}
\label{fig:illus}
\end{figure*}

\section{Our Method}

As natural language data, each sample $z$ is represented as a sequence, where $|z|$ denotes its length and $z_t$ denotes the $t$-th element. Any sequence distribution $p(z)$ follows an auto-regressive factorization  $p(z)=\prod_{t=1}^{|z|} p(z_t \mid z_{<t})$. We consider a LLM parameterized by $\theta$, which maps a context prefix $z_{<t}$ to an output distribution $p_\theta(\cdot \mid z_{<t})$. Given a dataset $\mathcal{D}=\{ z \mid z \sim p_T(z) \}$ drawn from a target distribution $p_T$ corresponding to domain $T$, our objective is to align the model distribution $p_\theta$ with $p_T$ through fine-tuning. Depending on the domain, a sample $z$ may take various textual forms, including standalone documents, instruction-response pairs, or decision trajectories.

The underlying intuition is that an intelligent system capable of generating realistic, diverse, and correct samples within a domain effectively demonstrates expertise in that domain. Moreover, due to the Bayesian nature of ICL in LLMs, such a generator can naturally function as a ready-to-use expert assistant without additional prompting. Motivated by this insight, we adopt a generative adversarial perspective to guide model alignment.

Specifically, given a dataset $\mathcal{D}$ and a generative model $p_\theta$, we design the learning process around three core components: (i) \textbf{Generation}, where synthetic samples $z'$ are produced from $p_\theta$; (ii) \textbf{Discrimination}, where a scalar score is assigned to an input $z$ or $z'$; and (iii) \textbf{Update}, where the model parameters $\theta$ are optimized accordingly. However, instantiating this framework for LLMs is non-trivial. 
Challenges are ensuring sufficient diversity in generated samples, designing a discrimination signal $D(z)$ that is differentiable and optimizable with respect to $\theta$, and decomposing the adversarial objective and performing stable updates under the constraints of discrete text generation and a unified generator--discriminator architecture.

\subsection{Generation}
To function effectively as a generator within the GAN framework, the generation of $z'$ is expected to be diverse enough, as drawing from a continuous distribution. Typical GANs implement $G$ using a model that takes noise drawn from a continuous distribution as input, but LLMs lacks such mechanism. To learn the target distribution $p_T$ while preserving general intelligence, we utilize the few-shot ICL capabilities of modern LLMs \cite{brown2020language,wang2023self}.

We sample a few-shot subset $\{z^i\}_{i=1}^{n\text{-shot}} \subset \mathcal{D}$ to construct a context prompt $Z_{\text{ctx}}$ containing the $n$ real examples. The model generates a "fake" sample $z'$ via:
\begin{align}\label{eq:gen}
	z' \sim p^G_\theta(z')\equiv p_{\theta}(z' \mid G_\text{Prompt}, Z_{\text{ctx}}),
\end{align}
where $G_\text{Prompt}$ is the generative instruction in the prompt, such as ``\textit{generate a new example following the pattern of provided examples}''. Since LLMs possess ICL capacity to identify patterns in context, it is expected that the LLM can learn $p_T$ from the provided $Z_{\text{ctx}}$. Meanwhile, as the number of combinations and permutations of examples $\{z^i\}$ from $\mathcal{D}$ to form $Z_{\text{ctx}}$ is vast, and the inherent randomness in the generation process $p_\theta(\cdot \mid x)$ is controllable, the generation of $z'$ maintains sufficient diversity.

\subsection{Discrimination}
The discrimination mechanism is expected to output a scalar $D(z)$ for each sample $z$ (or $z'$), to evaluate the objective \eqref{eq:gan}.
Typical GANs implement the discriminator model with a classifier with demanded output. For using LLM as discriminator, there are implementations of using additional module to map the final hidden states \cite{yu2023fine}, or directly asking the LLM to output a scalar score in text \cite{yuan2024self}. However, they are not feasible in SGALM, as the former one breaks the behavior of LLM as general intelligence, while the latter one requires first fine-tuning on scoring dataset.

In SGALM, we ask the LLM if the input is ``Real" or ``Fake", and obtain the scalar score based on its output distribution. Specifically, given an input $z$, we have a distribution 
$p_{\theta}(\cdot | D_\text{Prompt}, z),$
where $D_\text{Prompt}$ is the discriminative instruction in prompt, like
``\textit{Is the example Real (human-written) or Fake (LLM-generated)? Answer with one word}".
Then we have the output probability $p_{\theta}(\text{Real} | D_\text{Prompt}, z)$ and $p_{\theta}(\text{Fake} | D_\text{Prompt}, z)$. Define the normalized binary distribution on $\{\text{``Real''} , \text{``Fake''}  | z\}$ as
\begin{align}\label{eq:dis2}
\!\!\!\!
p^\text{real}_\theta(z)
\! \equiv \!
\frac{p_{\theta}(\text{Real} | D_\text{Prompt}, z)}
{p_{\theta}(\text{Real} | D_\text{Prompt}, z)  
\! + \! 
p_{\theta}(\text{Fake} | D_\text{Prompt}, z)},
\end{align}
and $p^\text{fake}_\theta(z)\equiv1-p^\text{real}_\theta(z)$.
They are continuous value implying how real/fake the LLM think the sample $z$ is. Thus a $D(z)=p^\text{real}_\theta(z)$ (or any continuous function in $(0,1)$ monotonic increasing with $p^\text{real}_\theta(z)$) can be used for effectively optimizing both the generation and discrimination.

\subsection{Updating}

In SGALM, 
the discrimination objective is straight-forward, maximizing the probability of telling ``Real'' for real samples $z$ while telling ``Fake'' for fake samples $z'$.
Let  $D(z)$ be some continuous function in $(0,1)$ monotonic increasing with $p^{real}_\theta(z)$, the discrimination objective is
\begin{align}
\nonumber
& \max_D \mathbb{E}_{z'\sim p_G}[\log(1-D(z'))]+\mathbb{E}_{z\sim p_T}[\log D(z)] = 
\\
& \min_{\theta} \! -\mathbb{E}_{z'\sim p_G}[\log(1-D_\theta(z'))] \! - \! \mathbb{E}_{z\sim p_T}[\log D_\theta(z)].
\label{eq:d1}
\end{align}
Note that specifically in SGALM, $G$ and $D$ are implemented with the same model $\theta$, i.e., $p_G(z')=p_\theta^G(z')$ and $D(z)=D_\theta(z)$. However, the objective here is to discriminate real/fake samples better, rather than generating worse. So denoting a detached copy of $\theta$ as $\theta^\dag$, 
\eqref{eq:d1}
\begin{align*}
=\min_{\theta} 
\! -
\mathbb{E}_{z'\sim p_{\theta^\dag}^G}[\log(1 \! - \! D_\theta(z'))]
\! - \!
\mathbb{E}_{z\sim p_T}[\log D_\theta(z)].
\end{align*}
The discriminator parameters are updated by taking gradient steps with respect to the discriminator objective, by 
\begin{align}
\nabla_\theta J(D)
=-\int p_{\theta}^G(z') \nabla_\theta \log(1 \! - \! D_\theta(z')) dz'
\notag\\-
\int p_T(z) \nabla_\theta \log D_\theta(z) dz. \label{eq:gd-d}
\end{align}
The expectations are approximated via Monte Carlo sampling using
fake samples generated from $p_{\theta}^G$ and real samples from the dataset $\mathcal{D}$.

And the generation objective is to generate better samples to fool the discrimination,
\begin{align}
\!\!\!\!
\min_{G} \mathbb{E}_{z' \! \sim \! p_G}[\log(1 \!\! - \!\! D(z'))]
\! = \!
\min_{\theta} \mathbb{E}_{z' \! \sim \! p_{\theta}^G}[\log(1 \!\! - \!\! D(z')]. 
\!\!
\label{eq:g1}
\end{align}
Similarly, the objective here is to generate more realistic samples rather than discriminate worse. So \eqref{eq:g1}
\begin{align*}
	=\min_{\theta} \mathbb{E}_{z' \sim p_{\theta}^G}[\log(1-D_{\theta^\dag}(z'))],
\end{align*}
which can be done by taking steps with generation gradient
\begin{align}
\!\!\!\!
\nabla_\theta J(G)
\! = \!\!
\int 
\!\!
p_{\theta}^G(z')
\log(1 \! - \! D_{\theta}(z'))
\nabla_\theta \log p_\theta^G(z')
dz'.
\label{eq:gd-g}
\end{align}
This can also be estimated by Monte-Carlo sampling from fake dataset generated from $p_{\theta}^G$.
The complete algorithm is provided in Algorithm~\ref{alg:sgalm}.

\begin{algorithm}[ht]
	\caption{SGALM Algorithm.}
	\label{alg:sgalm}
	\small
	\begin{algorithmic}
		\STATE {\bfseries Input:} Dataset $\mathcal{D}$, LLM $\theta$, iteration number $t$.
		\FOR{$iter = 0, \dots, t-1$}
		\STATE Generate fake dataset $\mathcal{D}'=\{z'\}$ by \eqref{eq:gen}.
		\STATE Discrimination update for \eqref{eq:d1} with \eqref{eq:gd-d}, and generation update for \eqref{eq:g1} with \eqref{eq:gd-g}.
		
		%
		\ENDFOR
	\end{algorithmic}
\end{algorithm}

\section{Theoretical Analysis}

We provide theoretical analysis showing that 
(i) the generative adversarial game converges when the generation distribution is exactly aligned to the true distribution, and (ii) though SGALM is trained with few-shot generation, it is also capable of zero-shot understanding of target domain.

\begin{table*}[ht]\centering\small
	\caption{Comparison with existing works. SGALM is unique in establishing a self-contained generative adversarial loop without relying on external reward models, ground truth, nor specific assumption on the existing LLM's generation/discrimination quality.}
	\label{tab:comparison_works}
	\resizebox{\textwidth}{!}{
		\begin{tabular}{lccc}
			\toprule
			\textbf{Method} & \textbf{Reward Dependency/Assumption} & \textbf{Generation Scope} & \textbf{Data Filter / Supervision}\\
			\midrule
			Self-Instruct \cite{wang2023self} & None & New $(x,y)$ & Defined Rules  \\
			ReST \cite{gulcehre2023reinforced} & External Reward Model & Given $x$, New $y$& Ext. Reward \\
			Iterative-DPO \cite{xiong2024iterative} & External Reward Model & Given $x$, New $y$& Win/Lose Pair via Ext. Reward  \\
			Iterative-RPO \cite{pang2024iterative} & Ground Truth Answer & Given $x$, New $y$& Win/Lose Pair via True/False \\
			\midrule
			Self-Rewarding \cite{yuan2024self} & LLM excels in scoring& New $(x,y)$  &Score by  LLM-as-Judge \\
			SPIN \cite{chen2024self} &  LLM generation is always bad & Given $x$, New $y$& Real/Generated Pair \\
			\midrule
			\textbf{SGALM (Ours)} & \textbf{None} & \textbf{New $z$, including $(x,y)$} & \textbf{Adversarial  Discrimination} \\
			\bottomrule
		\end{tabular}
	}
\vspace{-10px}
\end{table*}
\subsection{Distribution Alignment via Generative Adversarial Game}
We first analyze the behavior of the discrimination update, i.e., updating \eqref{eq:d1} with \eqref{eq:gd-d}. This process is mathematically equivalent to a binary classification task minimizing cross-entropy. It turns out that the optimal discrimination function $D^*(z)$ is the posterior probability to be real.

We rely on the following mild assumptions, which are typical for the analysis for generative adversarial games\cite{arjovsky2017wasserstein,mescheder2018training}.
\begin{assumption}[The Generative Adversarial Game]
(i). \textit{Infinite Capacity:} 
The parameter space of $\theta$ has sufficient capacity to represent both the optimal discriminator function and the true data distribution;
(ii). \textit{Differentiability:} 
The densities $p_\theta(z)$ and $p_T(z)$ continuous, and $p_\theta(z)$ is differentiable with respect to $\theta$.
\end{assumption}
	
\begin{proposition}
\label{prop:discriminator}
For a fixed generator distribution $p_{G}(z)$, the optimal discriminator $D^*(z)$ trained via \eqref{eq:d1} is making discrimination by:
$D^*(z)=\nicefrac{p_T(z)}{p_T(z) + p_G(z)}$.
\end{proposition}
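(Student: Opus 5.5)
This is the standard pointwise argument from the original GAN analysis; the only adaptation is that $D$ is realized through $\theta$. We fix $p_G$ and write the discrimination objective \eqref{eq:d1}, in its maximization form, as a single integral over the sample space:
\begin{align*}
J(D) = \int \Big( p_T(z)\log D(z) + p_G(z)\log\big(1-D(z)\big) \Big)\, dz .
\end{align*}
Both expectations are taken over the same space of sequences, so this rewriting is immediate. In the discrete case the integral is a sum, and nothing below changes.

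By Assumption (i), infinite capacity, the family $\{D_\theta\}$ contains every measurable function $z\mapsto D(z)\in(0,1)$. Here $D_\theta$ is $p^\text{real}_\theta$ or a continuous monotone increasing transform of it, and we may take the transform to be the identity without loss, since it is surjective onto $(0,1)$ in the relevant sense. Under this assumption, maximizing over $\theta$ is the same as maximizing over arbitrary functions $D$. We then maximize the integrand separately for each $z$. If either of $p_T(z)$ or $p_G(z)$ is zero, we use the convention that $D$ is unconstrained there (or equals the limiting value $0$ or $1$). This pointwise step is legitimate because any $D$ that is pointwise optimal is also globally optimal.

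For fixed $a=p_T(z)\ge0$ and $b=p_G(z)\ge0$ with $a+b>0$, consider $f(y)=a\log y+b\log(1-y)$ on $(0,1)$. Setting $f'(y)=a/y-b/(1-y)=0$ gives $y=a/(a+b)$. This critical point is the unique maximizer because $f''(y)=-a/y^2-b/(1-y)^2<0$, so $f$ is strictly concave. Substituting back yields $D^*(z)=p_T(z)/(p_T(z)+p_G(z))$.

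The main obstacle is the justification step, not the calculus. We must argue that the parametrized, shared-parameter discriminator can realize this pointwise optimum, and that the detached generator $p_{\theta^\dag}^G$ really is held fixed while $D_\theta$ changes. The first point is exactly what Assumption (i) is for. The second follows from the stop-gradient construction in \eqref{eq:d1}, which makes $p_G$ a constant with respect to the discriminator update. Assumption (ii), continuity of densities, ensures $D^*$ is a well-defined continuous function, uniquely determined almost everywhere on the support of $p_T+p_G$.
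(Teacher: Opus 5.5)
Your proposal is correct and follows essentially the same route as the paper's proof. You write the discrimination objective as $\int \big(p_T(z)\log D(z) + p_G(z)\log(1-D(z))\big)\,dz$, maximize the integrand pointwise, and solve $p_T(z)/D(z) - p_G(z)/(1-D(z)) = 0$. Your strict-concavity check and your remarks on realizability under infinite capacity, the stop-gradient on $p^G_{\theta^\dag}$, and the boundary cases where $p_T$ or $p_G$ vanishes make explicit what the paper leaves implicit.
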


The proof is provided in Appendix~\ref{app:t1}.
This result shows that through discrimination update, the model implicitly learns the real and generated data distributions and discriminate by the posterior probability.


Next, we show that performing generation update against above optimal discriminator leads to the recovery of the true data distribution.

\begin{proposition}
	\label{prop:generator}
	Given the optimal discriminator $D^*$, the global minimum of the generator objective is achieved if and only if the generated distribution matches the real distribution, i.e., $p_{G^*}(z) = p_T(z)$.
\end{proposition}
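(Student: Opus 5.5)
The plan is the classical Goodfellow-style argument: plug the optimal discriminator of Proposition~\ref{prop:discriminator} into the full objective \eqref{eq:gan}, so the generator faces a functional $C(G)$ of $p_G$ alone, and then show that $C(G)$ is a constant plus a nonnegative divergence. Concretely, with $D^*(z)=p_T(z)/(p_T(z)+p_G(z))$ we have $1-D^*(z)=p_G(z)/(p_T(z)+p_G(z))$, hence
\begin{align*}
C(G) = \mathbb{E}_{z\sim p_T}\Bigl[\log \tfrac{p_T(z)}{p_T(z)+p_G(z)}\Bigr] + \mathbb{E}_{z\sim p_G}\Bigl[\log \tfrac{p_G(z)}{p_T(z)+p_G(z)}\Bigr].
\end{align*}
The real-data term in \eqref{eq:gan} does not depend on $\theta$ through the generator, so minimizing the generator objective \eqref{eq:g1} against $D^*$ is the same as minimizing $C(G)$. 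I will state this reduction explicitly, since \eqref{eq:g1} omits that term.

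Next I add and subtract $\log 2$ inside each expectation, introducing the mixture $m=\tfrac12(p_T+p_G)$, and rewrite
\begin{align*}
C(G) = -\log 4 + \mathrm{KL}(p_T\,\|\,m) + \mathrm{KL}(p_G\,\|\,m) = -\log 4 + 2\,\mathrm{JSD}(p_T\,\|\,p_G).
\end{align*}
This is just algebra with the logarithms. The continuity assumption (ii) guarantees that the integrals are well defined and that $m>0$ wherever $p_T$ or $p_G$ is positive, so both KL terms are finite; in fact each is at most $\log 2$.

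The conclusion then follows from properties of the Jensen--Shannon divergence. It is nonnegative by Gibbs' inequality, so $C(G)\ge -\log 4$. For sufficiency, $p_G=p_T$ gives $D^*\equiv \tfrac12$ and $C=-\log 4$ directly. For necessity, suppose $C(G)=-\log 4$. Then both KL terms vanish, so $p_T=m$ almost everywhere, which forces $p_G=p_T$ almost everywhere. Continuity upgrades this to equality everywhere on the support. Assumption (i) ensures that $p_T$ is actually attainable as $p^G_\theta$, so the infimum is a minimum.

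The main obstacle is justifying that the generator is evaluated against the exact $D^*$ and that the shared-parameter/detached-copy scheme does not break the reduction. I would handle it by treating the proposition at the population level: Proposition~\ref{prop:discriminator} together with infinite capacity lets $D_{\theta^\dag}=D^*$ be held fixed during the generation step. A secondary subtlety is the equality case of KL (almost-everywhere equality), which I will address via continuity of the densities.
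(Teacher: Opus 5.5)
Your core computation is the paper's proof: substitute $D^*=p_T/(p_T+p_G)$ into both terms, rewrite the result as $-\log 4+2\,\mathrm{JSD}(p_T\|p_G)$, and use nonnegativity of JSD together with its equality case. Your mixture/KL bookkeeping is correct.

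The problem is the framing around it. In that computation $D^*$ must be the best response $D^*_G$ to \emph{each candidate} $p_G$, which is Goodfellow's virtual criterion $C(G)=\max_D J(G,D)$. Under that reading the real-data term $\mathbb{E}_{p_T}[\log D^*_G]=\mathbb{E}_{p_T}[\log\tfrac{p_T}{p_T+p_G}]$ does depend on $p_G$. So the reason you give for identifying \eqref{eq:g1} with $C(G)$ does not hold.

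Your handling of the ``main obstacle'' makes this worse. Suppose $D_{\theta^\dag}=D^*$ is frozen at the current generator $p_{G_0}$ during the generation step. Then the generator objective is $\int p_G(z)\log\tfrac{p_{G_0}(z)}{p_T(z)+p_{G_0}(z)}\,dz$, which is linear in $p_G$. Its minimizers put all their mass where $D^*$ is largest, which is mode collapse rather than $p_T$. Worse, if $p_{G_0}=p_T$ then $D^*\equiv\tfrac12$ and \emph{every} $p_G$ is a global minimizer, so the ``only if'' direction fails. Holding $D^*$ fixed is exactly what the $C(G)$ argument must not do.

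The fix is to state the proposition, as the paper implicitly does, with $D^*=D^*_G$ recomputed for each $G$. It characterizes the minimizer of $C(G)$, not the outcome of one alternating step against a frozen discriminator, so drop the ``held fixed'' remark. If you prefer to work with the generator term of \eqref{eq:g1} alone, that also works, but for a different reason than the one you gave. With $m=\tfrac12(p_T+p_G)$ one has $\mathbb{E}_{p_G}[\log(1-D^*_G)]=\mathrm{KL}(p_G\|m)-\log 2$, which by itself is minimized exactly when $p_G=p_T$. So dropping the real-data term is harmless once $D^*$ varies with $G$.
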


The proof is provided in Appendix~\ref{app:t2}.
This result suggests the adversarial feedback from $D^*$ forces the generator $p_{G^*}$ to converge to the true data distribution $p_T$.

\subsection{From Few-Shot Generation to Zero-Shot Understanding}

A natural question when interpreting SGALM is whether its reliance on few-shot generation and self-discrimination merely leads to context-level imitation rather than genuine domain understanding. Since synthetic samples are generated through ICL, it is not immediately obvious why the resulting model should generalize beyond the specific prompts used during training. Here, we link the global distributional convergence to the desired ready-to-use capability: understanding the target domain $p_T(z)$ via the ability to respond to $\forall z_{<t}$ (e.g., zero-shot instruction following), rather than mimicking provided examples. To do so, we formalize the role of the prompting mechanism described in \eqref{eq:gen}.

The ICL capacity in LLM can be formalized as performing Bayesian inference (MAP) with prior of pre-training distribution.
We assume this holds based on existing literature\cite{akyurek2022learning,ahn2023transformers,bai2023transformers,wu2023many,li2023transformers,wu2025context}.
\begin{assumption}[ICL Capacity in LLM]\label{assumption}
Let $f_{z^i}(\hat{z})=p(z^i\mid \hat{z})$ be the likelihood function of domain $\hat{z}$, 
and $p(\hat{z}\mid\theta)$ be the prior distribution``memorized" by $\theta$.
Given few-shot examples $\{z^i\}_{i=1}^{n\text{-shot}}$ to construct a prompt as \eqref{eq:gen}, 
the LLM has output distribution
$p_{\theta}(\cdot\mid \{z^i\}_{i=1}^{n}) = p_{\hat{z}^*}(\cdot)$,
where
${\hat{z}^*}=\arg\max_{{\hat{z}}} \prod_{i=1}^nf_{z^i}(\hat{z})p(\hat{z}\mid\theta)$.
\end{assumption}

In plain words, given a few examples, the LLM identifies the underlying domain that best explains those examples given its pre-trained knowledge, and subsequently generates outputs according to that domain's distribution.
We assume the LLM we use are equipped with such capacity and would preserve during the fine-tuning, 
and for any domain $\hat{z}$, 
the distribution in the domain $ p(z\mid\hat{z})$ is continuous.
Therefore, we have the following theorem,
with proof in Appendix~\ref{app:t3}.
\begin{theorem}\label{thm:1}
	The distribution of the converged SGALM satisfies
	$p_{\theta^*}(\cdot)=p_T(\cdot)$.
\end{theorem}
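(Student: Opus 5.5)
The plan is to pass from few-shot generation to zero-shot behaviour in three moves: apply Propositions~\ref{prop:discriminator} and~\ref{prop:generator} at convergence, use Assumption~\ref{assumption} to identify the inferred domain, and then argue that the prior concentrates on that domain.

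\textbf{Step 1: the converged generator matches $p_T$.} At a converged point $\theta^*$ of Algorithm~\ref{alg:sgalm}, the discriminator is optimal for the current generator. By Proposition~\ref{prop:discriminator} it therefore equals $D^*=p_T/(p_T+p^G_{\theta^*})$. By Proposition~\ref{prop:generator}, a generator at the minimum against $D^*$ satisfies $p^G_{\theta^*}=p_T$. Concretely, for every few-shot context $Z_{\text{ctx}}$ of real examples drawn from $\mathcal{D}$, we get $p_{\theta^*}(\cdot\mid G_\text{Prompt},Z_{\text{ctx}})=p_T(\cdot)$. At minimum this holds on average over contexts. I would strengthen it to hold for each context by noting that the objective decomposes over the context-sampling distribution, and that infinite capacity (Assumption (i)) lets each conditional be optimized separately.

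\textbf{Step 2: identify the inferred domain.} By Assumption~\ref{assumption}, $p_{\theta^*}(\cdot\mid\{z^i\})=p_{\hat z^*}(\cdot)$, where $\hat z^*$ is the MAP domain. Step 1 then gives $p(\cdot\mid\hat z^*)=p_T$ for every admissible context. Since the map $\hat z\mapsto p(\cdot\mid\hat z)$ is continuous, I would identify $\hat z^*$ with the domain $T$, up to distributional equivalence. So the MAP argmax lands on $T$ for all real contexts, including arbitrarily large $n$ and arbitrary subsets of $\mathcal{D}$.

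\textbf{Step 3: pass to zero-shot.} The zero-shot model is the $n=0$ case, in which the argmax is taken over the prior $p(\hat z\mid\theta^*)$ alone. I would argue that the prior must itself be maximized at $T$, in the following way.
\begin{itemize}
\item Take contexts whose likelihood factor $\prod_i f_{z^i}(\hat z)$ is nearly flat across the competing domains, e.g.\ a single short or uninformative example. This is possible since $p(z\mid\hat z)$ is continuous and real samples near the common support exist.
\item For such contexts the argmax is driven by the prior.
\item Requiring the argmax to equal $T$ for these contexts forces $\arg\max_{\hat z}p(\hat z\mid\theta^*)=T$.
\item An alternative route is that SGALM's gradient updates on $\theta$, with the likelihood terms fixed, can only decrease the objective further by reweighting the prior toward $T$. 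Stationarity then yields the prior maximizer at $T$.
\end{itemize}
Given this, $p_{\theta^*}(\cdot)=p_{\hat z=T}(\cdot)=p_T(\cdot)$, and conditioning autoregressively on any prefix $z_{<t}$ gives the ready-to-use behaviour.

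\textbf{Main obstacle.} I expect Step 3 to be the hardest part. MAP with $n\ge1$ examples matching $T$ does not in general imply that the prior mode is $T$, because a strong likelihood can override a prior peaked elsewhere. The honest fix is to use the one-shot (or minimally informative) contexts as the bridge described above. Failing that, I would add the mild condition that the likelihood is non-degenerate, so that the prior remains decisive for small $n$.
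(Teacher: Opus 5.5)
Your skeleton is the paper's: Proposition~\ref{prop:generator} with Assumption~\ref{assumption} gives $p_T$ for every real context, the MAP domain is then $T$, one-shot contexts pin down the prior, and the $n=0$ case is read off. But Step~3 does not go through. Nothing in the assumptions guarantees real examples with a nearly flat likelihood ratio, and continuity of $p(z\mid\hat z)$ does not produce them. Take two domains with $p(\cdot\mid\hat z_1)=\tfrac{1}{10}p_T+\tfrac{9}{10}q$, where $q$ is a continuous density (or a distribution over strings) supported outside $\text{supp}(p_T)$, and prior $p(\hat z_1\mid\theta^*)=2\,p(T\mid\theta^*)$. On $\text{supp}(p_T)$ the ratio $p(z\mid\hat z_1)/p(z\mid T)$ is identically $\tfrac{1}{10}$. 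So for every real context of any size $n\ge 1$, the posterior odds of $\hat z_1$ against $T$ are $2\cdot 10^{-n}<1$. The MAP is $T$ and Steps~1--2 hold exactly, yet the prior mode is $\hat z_1$ and the zero-shot output $\tfrac{1}{10}p_T+\tfrac{9}{10}q$ is not $p_T$. Hence Step~3 cannot be derived from the stated assumptions, and the ``non-degeneracy'' condition you gesture at must be made precise enough to exclude exactly this situation. Your fallback route fails on the same example. Every training context already yields $p_T$, so $D^*\equiv\tfrac12$ and the objective is at its global minimum. The zero-shot distribution never enters the objective, so stationarity says nothing about the prior mode.

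The paper closes this step differently. Assuming some $\hat z\neq T$ has $p(\hat z\mid\theta^*)>0$, it asserts by continuity a real $z^j$ with $p(z^j\mid\hat z)/p(z^j\mid T)>p(T\mid\theta^*)/p(\hat z\mid\theta^*)$, so the one-shot MAP would not be $T$. It concludes $p(\hat z\mid\theta^*)=\delta_T(\hat z)$, after which the zero-shot step is immediate. Applied to your prior-mode competitor, that assertion is exactly what you need, and you need less (a threshold of at most $1$ instead of an arbitrary prior ratio), so your formulation is the more economical one. But the example above violates the paper's assertion too, so it is not a consequence of continuity either. A hypothesis that does close your argument is that competing domains put no mass outside $\text{supp}(p_T)$. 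If the prior mode is $\hat z_1\neq T$, identifiability gives $p(\cdot\mid\hat z_1)\neq p_T$. Then some real $z$ has $p(z\mid\hat z_1)>p(z\mid T)$; otherwise $p(\cdot\mid\hat z_1)\le p_T$ on $\text{supp}(p_T)$ with both integrating to $1$ there, forcing equality. Consequently $p(z\mid\hat z_1)\,p(\hat z_1\mid\theta^*)>p(z\mid T)\,p(T\mid\theta^*)$, so the one-shot context $\{z\}$ has MAP $\neq T$, contradicting Step~1.

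Two smaller points. In Step~2, what you actually use is identifiability of $\hat z\mapsto p(\cdot\mid\hat z)$, not its continuity; used that way, your Step~2 is cleaner than the paper's large-$n$ $\gamma^n$ argument. In Step~1, Proposition~\ref{prop:generator} only forces the context-averaged generator to equal $p_T$. At $D^*\equiv\tfrac12$ every family of conditionals with that average is optimal, so ``decomposition plus infinite capacity'' does not yield per-context matching. The paper asserts this step without further argument as well. It has to be taken as an assumption, and for contexts of the size used in Step~3, not only the 4-shot contexts seen in training.
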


Since language models factorize the joint distribution autoregressively as
$
p(z) = \prod_{t=1}^{|z|} p(z_t \mid z_{<t}),
$
matching the joint distribution implies that all conditional distributions are also matched,
i.e.,
$p_{\theta^\ast}(z_{\ge t} \mid z_{< t}) =
p_T(z_{\ge t} \mid z_{< t})$, $\forall t$.
%

\section{Comparison with Existing Methods}
\label{subsec:comparison}

Before presenting quantitative results, we compare SGALM with prior alignment/fine-tuning methods that mitigate the scarcity of human annotations (Table~\ref{tab:comparison_works}). Existing approaches differ in external dependency, generation scope, and supervision. Iterative synthetic-data pipelines repeatedly sample, filter, and fine-tune, typically using external rewards or ground-truth verification to construct preference signals (e.g., Iterative-DPO \cite{xiong2024iterative}, ReST \cite{gulcehre2023reinforced}, Iterative-RPO \cite{pang2024iterative}). In contrast, self-improvement methods remove external feedback but introduce new assumptions: Self-Rewarding \cite{yuan2024self} relies on reliable LLM-as-a-judge scoring and may suffer reward drift, while SPIN \cite{chen2024self} assumes real responses always dominate model generations ($y>y'$), ignoring quality variance. SGALM instead introduces a fully adversarial loop that requires only a dataset (with $z$ not restricted to $(x,y)$ pairs), avoiding external reward models while continuously grounding discrimination with real samples. It does not rely on any assumption on the existing LLM's generation/discrimination quality, i.e., it is applicable to train a model from scratch with enough training to create a synthetic data engine.

\section{Empirical Results}
\label{sec:experiments}

Here, 
we evaluate the performance of SGALM, comparing it with existing fine-tuning methods for alignment.
We also analyze the training dynamics and the model's capability to serve as a synthetic data engine.

\subsection{Experimental Setup}

\textbf{Models and Datasets.} We use {Qwen2.5-3B-Instruct} \cite{yang2024qwen2} as our base model. 
Following the setting mentioned in Section~\ref{sec:set}, the model is fine-tuned given a domain-specific training set, and evaluated on the testing set from the same domain.
We evaluate on three widely-used benchmarks respectively,  with various domains and training set sizes:
(i) \textit{GSM8K} \cite{cobbe2021gsm8k}: Math word problems, with a 7.47k training set;
(ii) \textit{ARC-Challenge} \cite{allenai:arc}: Multiple-choice science question answering problems, with a 1.12k training set;
(iii) \textit{MBPP} \cite{austin2021program}: Python programming problems, with a 0.12k training set.


\textbf{Baselines.} We compare SGALM with the following alignment fine-tuning baselines:
(i) \textit{Base}: The original model evaluated in the zero-shot setting.
(ii) \textit{SFT}: Supervised fine-tuning on the target training set.
(iii) \textit{Self-Instruct} \cite{wang2023self}: SFT on synthetic instruction--response pairs generated via ICL.
(iv) \textit{Self-Rewarding} \cite{yuan2024self}: Iterative self-play using LLM-based self-evaluation to construct preference pairs.
(v) \textit{SPIN} \cite{chen2024self}: Iterative self-play that contrasts real responses with model-generated responses.
(vi) \textit{Iterative-RS}: Iterative self-play with reject sampling over multiple generated responses, adapted from ReST \cite{gulcehre2023reinforced} and Iterative-DPO \cite{xiong2024iterative} without external rewards.
(vii) \textit{Iterative-RPO} \cite{pang2024iterative}: An extension of Iterative-RS with an additional NLL objective.

\textbf{Implementation.} All ICL-generation is provided with 4-shot examples. For all iterative methods, in each iteration, we generate synthetic sample (pairs) with the same number as the true training set (7.47/1.12/0.12k). More details are provided in Appendix~\ref{app:imp}. Code will be provided to public.

\subsection{Main Results}

Table~\ref{tab:main_results} 
presents the results.
In the table, ``N.A." means not applicable. 
As without external feedback, we could not define "consistency" of answers  on tasks like coding, which is a challenge faced in many open-ended tasks beyond our benchmarks.
SFT, Self-Instruct, and SPIN on MBPP, Self-Rewarding, Iterative-RS and Iterative-DPO on ARC has lower performance than Base.
This indicates that fine-tuning, even with moderate configurations, led to performance degradation below the zero-shot (Base) model, as the base model is already powerful and the training set is too small. 
While one could argue that a zero learning rate would yield performance equal to Base, such a result is trivial to be reported.

SGALM consistently outperforms existing methods across all benchmarks. 
As for efficiency, 
SGALM does not incur significantly higher training costs than other self-play baselines, as they all follow the iteratively generation-updating pipeline and converge within few iterations. The cost is reported in Appendix~\ref{app:cost}.

\begin{table}[t]
\caption{Performance comparison. N.A. means not applicable.}
\label{tab:main_results}
\small
\setlength\tabcolsep{1.5pt}
\vspace{-10px}
\begin{center}
			\begin{tabular}{lccc}
				\toprule
				Method & GSM8K & ARC & MBPP \\
				\midrule
				Base & $67.95\!\pm\! 1.24$ & $48.04\!\pm\! 1.46$ & $72.37 \!\pm\! 1.56$\\
				SFT & $68.61\!\pm\! 1.26$ & $51.39\!\pm\! 1.43$ & $71.60\!\pm\!1.95$   \\
				Self-Instruct & $68.40\!\pm\! 1.23$ & $51.65\!\pm\! 1.48$ & $71.98\!\pm\!1.74$   \\
				Self-Rewarding & $68.83\!\pm\! 1.24$ & $48.01\!\pm\!1.48$ & $73.15 \!\pm\! 1.95$ \\
				SPIN & $69.29\!\pm\! 1.24$ & $48.85\!\pm\! 1.42$ & $71.21\!\pm\!1.10$   \\
				Iterative-RS & $69.77\!\pm\! 1.26$ & $47.86\!\pm\!1.43$  & N.A. \\
				Iterative-RPO & $69.60\!\pm\! 1.26$ & $47.92\!\pm\!1.45$  & N.A.\\\midrule
				{SGALM (D-Only)} &${69.83\!\pm\! 1.23}$ & ${49.24\!\pm\! 1.46}$& ${73.15\!\pm\! 1.56}$\\
				{SGALM (G-Only)} &${70.58\!\pm\! 1.25}$ & ${52.58\!\pm\! 1.41}$& ${73.54\!\pm\! 1.95}$\\
				{SGALM (w/o S)} &${72.65\!\pm\! 1.24}$ & ${51.65\!\pm\! 1.45}$& $\mathbf{76.26\!\pm\! 1.10}$\\
				\textbf{SGALM } &$\mathbf{72.71\!\pm\! 1.23}$ & $\mathbf{53.02\!\pm\! 1.45}$& ${75.10\!\pm\! 1.95}$ \\
				\bottomrule
			\end{tabular}
\end{center}
\vspace{-10px}
\end{table}

\subsection{Ablations and Analysis}
To understand the contribution of each component objective in SGALM, we implement the following three variants: \textit{D/G-only }that only performs discrimination/generation update respectively, while removing another objective, and \textit{w/o S} that use two separate models as discriminator and generator.
To better the learning process, we visualize the performance over iterations, shown as Figure~\ref{fig:training_curve}.

\subsubsection{Diverse Synthetic Samples Mitigates Over-Fitting}

In Figure~\ref{fig:training_curve}, the first thing to notice is that data scarcity is an important issue. 
That even on GSM8K which has the largest training set (7.5k) in our experiment, 
all baselines starts to over-fit before 4 iterations with degrading performance.
We can also find that baselines with generation by given $x$, new $y$ (SPIN, Iterative-RS, Iterative-RPO) severely suffer that their performance drops under base before iteration 4. While baselines that can generate new $(x,y)$ (Self-Instruct, Self-Rewarding, SGALM) stay above base, as such generation scope generate much more diverse synthetic samples that helps to generalize and mitigate the data-scarcity.


\begin{figure*}[htbp]
	\centering
	\begin{subfigure}[]{0.32\textwidth}
		\centering
		\includegraphics[width=\textwidth]{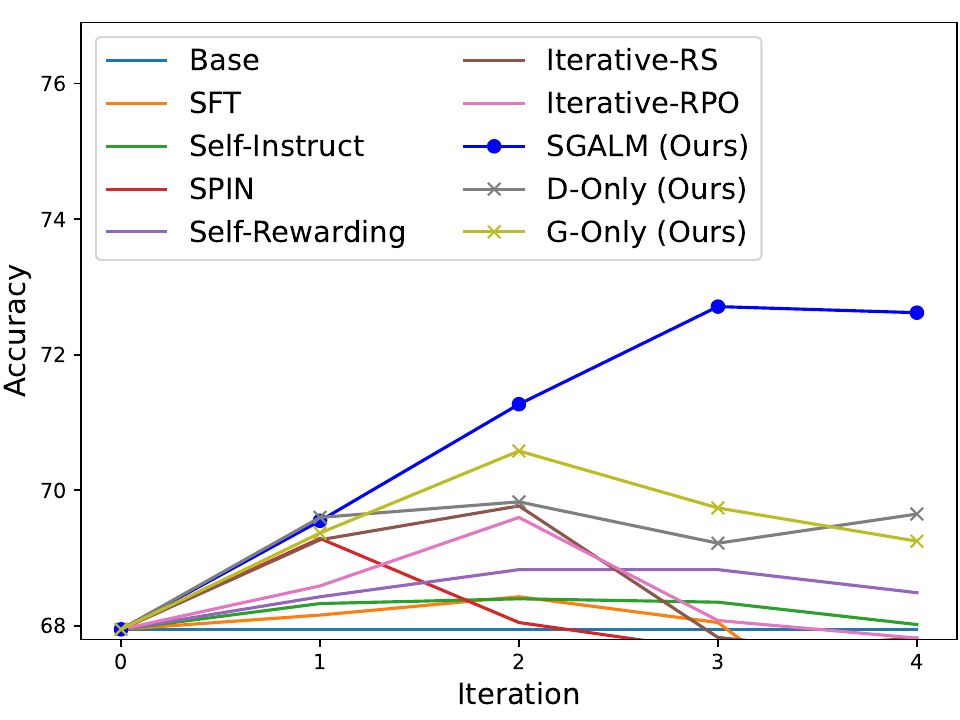}
		\caption{GSM8K (Math, 7.5k)}
		\label{fig:gsm8k}
	\end{subfigure}
	\hfill 
	\begin{subfigure}[]{0.32\textwidth}
		\centering
		\includegraphics[width=\textwidth]{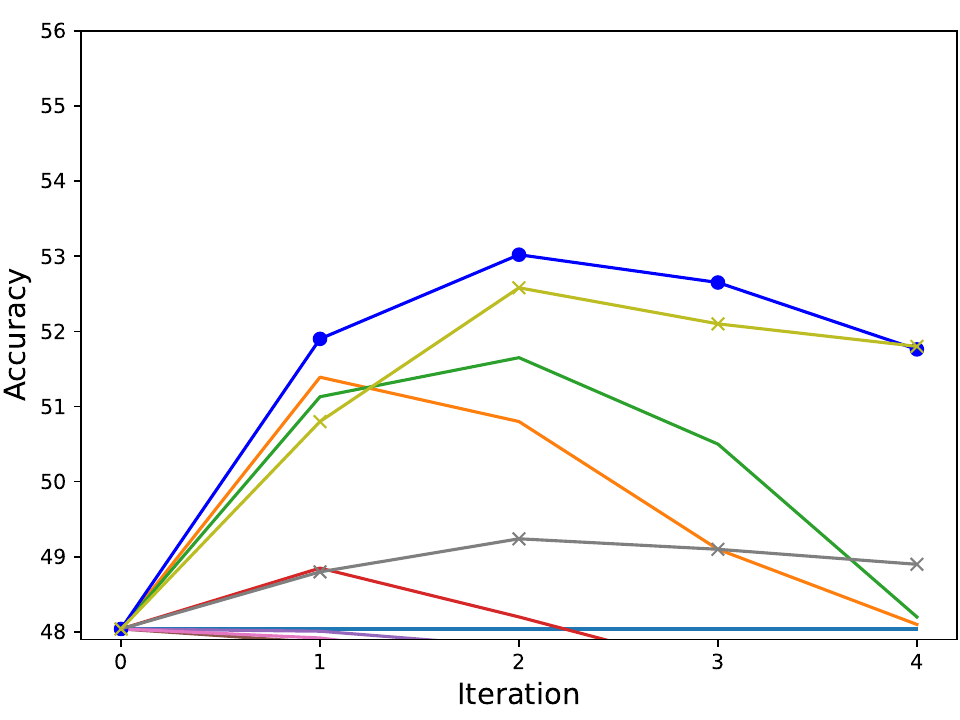}
		\caption{ARC (Science, 1.1k)}
		\label{fig:arc}
	\end{subfigure}
	\hfill 
	\begin{subfigure}[]{0.32\textwidth}
		\centering
		\includegraphics[width=\textwidth]{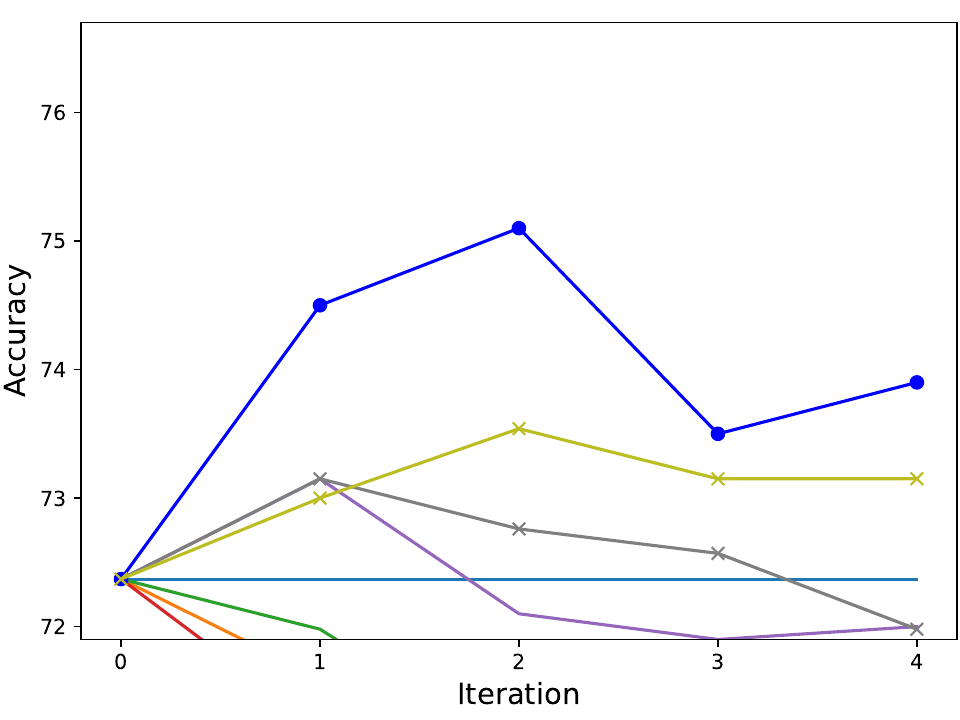}
		\caption{MBPP (Coding, 0.1k)}
		\label{fig:mbpp}
	\end{subfigure}
\vspace{-5px}
	\caption{Test accuracy across training iterations on GSM8K, ARC-Challenge, and MBPP for SGALM, its variants, and baseline methods.}
	\label{fig:training_curve}
	\vspace{-10px}
\end{figure*}

\subsubsection{The Duality of Generation and Discrimination}

We also find that D-only and G-only turn out to be strong baselines. D-only is only performing discrimination update 
(\eqref{eq:d1} with \eqref{eq:gd-d}) in every iteration. As it updates $\theta$, 
which is shared by the generation process, it also updates the generation distribution, as it has a KL divergence between generation distribution $\sum_{i=1}^4\text{KL}(p^G_{\theta^i}||p^G_{\theta^{i-1}})=5.89$. 
Though it is is not as significant as variants with generation update ($11.92$ for SGALM, $12.60$ for G-only), it is significantly larger than 0. 
And the stable and considerable performance improvement suggests that discrimination capacity overlaps with the desired understanding capacity.

G-only is only performing generation update 
(\eqref{eq:g1} with \eqref{eq:gd-g}) in every iteration. Such update framework is similar with Self-Rewarding, 
but with different way to discrimination: Self-Rewarding takes multiple generated $y$ for a certain $x$, 
and ask the LLM to give a goodness score and takes samples with the highest/lowest as win/lose pair; 
SGALM discriminates by a continuous score based on output distribution \eqref{eq:dis2}, 
for each $z=(x,y)$. Such discrimination way is not only optimizable, 
but also more flexible and grounded by the contrast between the semantic of ``Real" and ``Fake".
which result in the significant advantage of G-only over Self-Rewarding in Figure~\ref{fig:training_curve}. 

As for w/o S, which implements the discriminator and generator with two separate models, it (generator) shows comparable performance with SGALM. 
However, w/o S  requires about two times training cost, due to introducing an additional model. While the other baselines and variants require only one single model, 
we analysis w/o S with SGALM, not comparing with others. Comparing their performance, we infer that the relation between model capacity and task difficulty matters. 
SGALM co-benefits from the 
%
discrimination and generation in general intelligence, which could mutually promote each other, but limited to the parameter capacity to achieve both objectives with a single model. 
Thus, SGALM outperforms its variant without $\mathcal{S}$ on relatively simple tasks (ARC and GSM8K), while underperforming on the more challenging MBPP benchmark. 
Consequently, we infer that the larger the model is, the greater SGALM would benefit.

\subsubsection{The Unbalanced Generation and Discrimination Capacities}
A special phenomenon noticed in SGALM is the unbalance between discrimination and generation scores.
Figure~\ref{fig:p} shows the average $p^{\text{real}}_\theta$ on each iteration. Cases are provided in Appendix~\ref{app:case}.
Initially (iteration 0), the model assigns similarly near perfect average scores ($0.9923/0.9854$) for both real/generated samples. 
This is because most generated samples are generally reasonable and coherent, and the model has not been reinforced to discriminate.
After one iteration SGALM keeps giving average score $p^{\text{real}}_\theta(z)>0.9$ for real samples, 
while average score $p^{\text{real}}_\theta(z')<0.2$ for generated samples.
We infer this is because some patterns can be discriminated as AI-generated. 
Though we can not find seeming patterns in cases, it turns out to be easy to discriminate once the model is trained with discrimination objective (SGALM and D-only). 
While typical GAN can adjust the capacity of discriminator or generator repetitively to make them comparable for effective gradient update, which can not be done in SGALM, we managed to address in the following way.

\begin{figure}[t]
\centering
\includegraphics[width=0.48\textwidth]{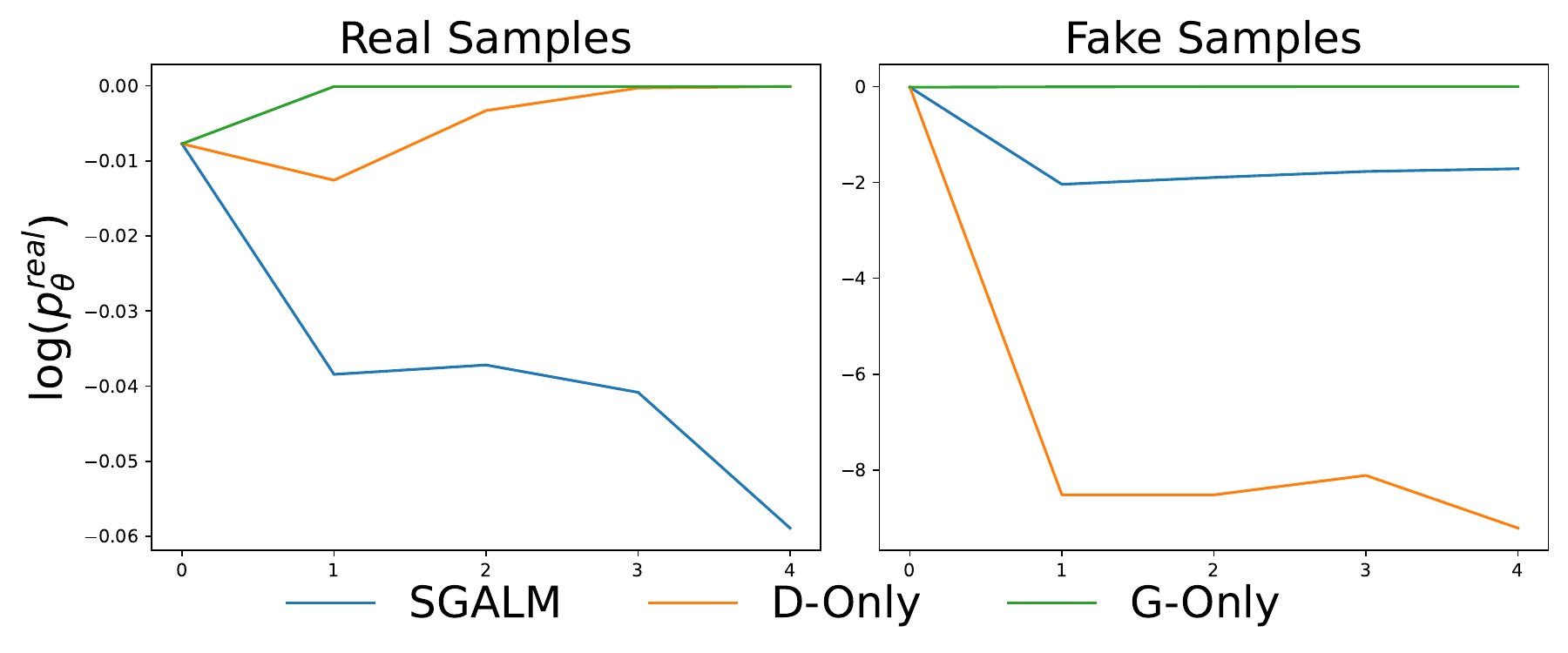}
\caption{$\log(p^{\text{real}}_\theta)$ vs Iteration of real and generated samples resp.}
\label{fig:p}
\vspace{-15px}
\end{figure}

Such unbalance is not fatal. It is not caused by memorization, i.e., over-fitting the training set that only gives high score to seen samples. We show the distribution of $p^{\text{real}}_\theta$ of well-trained SGALM in Figure~\ref{fig:count}. 
We can see that 
(i)~unseen real samples (test set) has very similar distribution with seen real samples (training set), which means they are not scored by memorization, but generalizable patterns; 
(ii)~the scores among generated samples are distinguishable, so we can implement the generation update with standardized rewards among generated batch, to make generation update effective and stable, which can be verified by the stable KL-divergence between generation distributions between two iterations (Appendix~\ref{app:kl}) and the growing of score of generated samples of SGALM (blue curve in Figure~\ref{fig:training_curve} right).


\subsection{Evidence against Mode Collapse and for ICL Capacity}\label{sec:collapse}
A historical challenge in training GAN, particularly with discrete text, is mode collapse \cite{kossale2022mode}, where the generator outputs identical or repetitive samples to exploit the discriminator. We observe no such phenomenon in SGALM. While generated samples frequently share common prefixes, this behavior reflects domain adherence rather than collapse. For example in GSM8K, $28\%$ of generated samples (iteration 4 model) start with ``\texttt{Q: A bakery}", but as the context grow longer, they gradually become distinguishable: $10\%$ of generated samples start with ``\texttt{Q: A bakery makes}" and $1.25\%$ of generated samples start with ``\texttt{Q: A bakery makes cupcakes}". This could be explained as the consequence of the common pattern of the few-shot examples from the same training set, and memorized knowledge in the model. Because the iteration 0 model also shows similar phenomenon.

Furthermore, this phenomenon provides empirical support for Assumption~\ref{assumption} (ICL Capacity), which posits that the model performs Bayesian inference to identify the target domain distribution from few-shot examples. The frequent occurrence of common prefixes reflects the model’s robust extraction of the shared prior knowledge and structural patterns inherent in the provided examples $Z_{ctx}$. Essentially, the model effectively "locks on" to the specific domain definition (the common prefix) dictated by the prompt ($p(\hat{z}\mid\theta)$). Crucially, the subsequent divergence of the sequences confirms that the model is not simply memorizing a single optimal path (mode collapse), but is instead sampling diverse trajectories related to the context-specific distribution ($\prod_{i=1}^nf_{z^i}(\hat{z})$)
once the context is established. This duality—rigid adherence to the domain pattern (prefix) coupled with flexible generation (suffix)—validates that ICL successfully serves as a mechanism for distribution recovery rather than mere imitation.

\begin{figure}[t]
\centering
\includegraphics[width=0.40\textwidth]{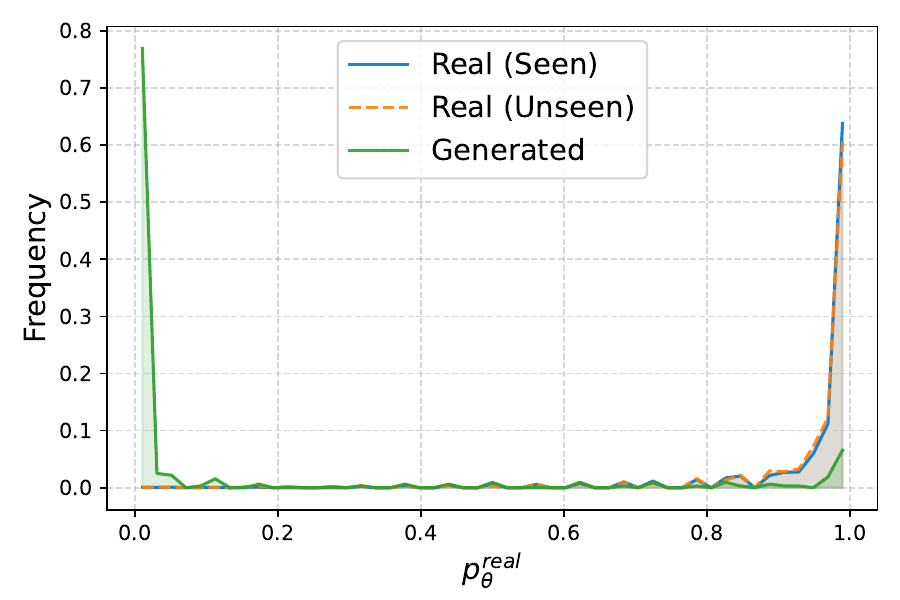}
\vspace{-5px}
\caption{Distribution of $p^{\text{real}}_\theta$ of seen real samples (training set), unseen real samples (test set), and generated samples.}
\vspace{-10px}
\label{fig:count}
\end{figure}

\subsection{SGALM as Synthetic Data Engine}
After aligned to the target distribution, SGALM can serves as a synthetic data engine:
we can draw few-shot examples from the training set $\mathcal{D}$ to form $Z_{\text{ctx}}$ in \eqref{eq:gen} to generate synthetic samples. 
Note that due to the very large combination and permutation numbers of real examples, 
and the inherent and controllable randomness in the generation process, we can draw almost infinite number of synthetic samples. 
And it can filter the synthetic samples by self-discrimination reality score $p^{\text{real}}_\theta(z')$.

\paragraph{Baselines}
We compare it with the the following baselines for synthetic data generation: 
(i) \textit{Self-Instruct} \cite{wang2023self}: Generate by ICL-generate using the base model.
(ii) \textit{RFT}: Generate multiple responses for each question using the base model, and filter by reject sampling by taking the majority answer.
(iii) \textit{Self-Instruct+RFT}: First generate new questions by Self-Instruct, then generate responses by RFT.
Another (smaller) model Qwen2.5-1.5B-Instruct will be aligned by SFT, on the synthetic enriched dataset containing the original training set of GSM8K, and certain number of generated samples.
All ICL-generation is with 4-shot examples randomly sampled from training set. For RFT, we generate 8 candidate to determine the majority answer, and choose random one leading to the majority answer.
For SGALM, to keep similar generation cost, we generate 8$\times$synthetic samples we want to keep, and then filter the top $\frac{1}{8}$ by highest self-discrimination reality score. We evaluate base model without SFT, SFT with real train set,  SFT with real train set and 7.47k/14.94k/29.88k (=Real+ 1$\times$/2$\times$/4$\times$ Syn.) synthetic samples generated by the 4 methods respectively.

The results are provided in Table~\ref{tab:synthetic}. 
The most critical finding is the trend observed when scaling the synthetic data volume from $1\times$ to $4\times$. 
SGALM is the only method that exhibits a positive scaling law, with performance continuously improving from $58.25\%$ to $59.73\%$. In stark contrast, standard Self-Instruct suffers from severe model collapse, where performance degrades drastically from $57.83\%$ to $54.25\%$—falling well below the baseline trained on real data. Similarly, RFT and Self-Instruct+RFT fail to sustain improvements, stagnating or declining at larger data scales. This demonstrates that SGALM's learned discriminator acts as a robust filter against ``toxic" or hallucinatory samples that typically accumulate in self-generated datasets, effectively converting the quantity of synthetic data into quality improvements.
Meanwhile, at all the synthetic scales ($1\times$/$2\times$/$4\times$ ), the model fine-tuned on SGALM-generated data achieves better performance surpassing both the baseline trained on real data ($57.16\%$) and other generation methods. This indicates that the adversarial discrimination process effectively selects high-quality samples that contribute more to learning than existing model generations.

\begin{table}[t]
	\caption{Comparison of SFT Qwen2.5-1.5B-Instruct on GSM8K, by synthetic enriched dataset generated with different methods.}
	\label{tab:synthetic}
	\centering
	\setlength\tabcolsep{2pt}
	\small
	\begin{tabular}{c|ccc}
		\toprule
		Base & \multicolumn{3}{c}{$50.72\pm1.38$} \\\midrule
		SFT with Real & \multicolumn{3}{c}{$57.16\pm1.36$} \\\midrule
		SFT with Real+  &  Syn.(1$\times$) &Syn.(2$\times$) & Syn.(4$\times$)  \\\midrule
		Self-Instruct  & $57.83\pm1.35$ &$57.04\pm1.36$ &$54.25\pm1.38$  \\
		RFT  & $58.06\pm1.36$&$58.03\pm1.34$ &$57.65\pm1.34$  \\
		Self-Instruct+RFT  & $57.95\pm1.36$ &$58.82\pm1.36$ &$57.80\pm1.37$ \\
		\textbf{SGALM  (Ours})  & $58.25\pm1.34$ &$59.56\pm1.36$&$\mathbf{59.73\pm1.34}$  \\
		\bottomrule
	\end{tabular}
\vspace{-10px}
\end{table}

\section{Conclusion}
In this work, we presented SGALM, a unified framework that redefines fine-tuning as a self-contained adversarial game within a single LLM. By utilizing the model's own ICL and output distributions, SGALM removes the need for external supervision or heuristic baselines, providing a fully grounded alignment process. Our theoretical analysis confirms that this approach converges to the true data distribution, effectively bridging the gap between few-shot generation and zero-shot understanding. SGALM demonstrates state-of-the-art results, serving effectively both as an alignment tool and a scalable synthetic data engine. Acknowledging the challenges of adversarial training, we view this work as a primary step toward robust generative alignment. Future research will focus on enhancing stability through noise injection, optimizing iteration schedules, and scaling the framework to a broader range of models and domains.


\section{Impact Statement}
This paper presents work whose goal is to advance the field of machine learning. There are many potential societal consequences of our work, none of which we feel must be specifically highlighted here.

\bibliography{example_paper}
\bibliographystyle{icml2026}

\newpage
\appendix
\onecolumn

\section{Related Works}\label{app:rw}
%
Here we discuss about that using LLMs to provide their own training signal has become a pivotal strategy for overcoming the bottleneck of high-quality human annotation. And discuss about other related works about adversarial training in LLMs and early text GANs.

\subsection{Synthetic Data Generation}
The paradigm began with Self-Instruct \cite{wang2023self,honovich2023unnatural}, which demonstrated that a weak model could be fine-tuned on instruction-output pairs generated by itself. Subsequent works like Alpaca \cite{dubois2023alpacafarm} and Vicuna \cite{chiang2023vicuna} popularized this distillation approach. More recent methods focus on increasing data complexity and diversity; for example, WizardLM \cite{xu2024wizardlm} employs "Evol-Instruct" to progressively rewrite instructions into more complex forms. However, these methods primarily rely on static distillation from a fixed teacher, lacking a dynamic feedback loop to correct errors during training, and the error would accumulate to collapse \cite{shumailov2024ai}.

\subsection{Self-Play and Iterative Refinement}
To enable models to improve themselves without a stronger teacher, Self-Play mechanisms have been introduced. Reinforcement Learning from AI Feedback (RLAIF) \cite{lee2023rlaif,bai2022training} replaces human preference labeling with model-generated preferences, scaling up the RLHF pipeline. Self-Rewarding Language Models \cite{yuan2024self} integrate the reward modeling capability directly into the policy model, allowing the LLM to act as its own judge during training.
Similarly, SPIN \cite{chen2024self} proposes a method where the model plays against its previous iteration. By treating the previous epoch's model as a generator of "negative" samples and the ground truth as "positive," SPIN optimizes a DPO-like objective to widen the gap between the two. Though also inspired by GAN, SPIN makes discrimination by assuming a strict dichotomy where model generations are always inferior to data, potentially penalizing valid but diverse responses. Our work (SGALM) addresses this by introducing a learnable discriminator that assesses quality dynamically rather than relying on a static assumption.

Related to self-play, iterative training methods like ReST \cite{gulcehre2023reinforced}, Iterative DPO \cite{xiong2024iterative} and Iterative RPO \cite{pang2024iterative} involve repeated cycles of sampling, ranking/filtering, and fine-tuning. While these methods improve performance, they typically rely on fixed reward models or ground-truth verification (e.g., in math problems), which limits their applicability in open-ended domains compared to our adversarial approach.

\subsection{Adversarial Training in LLMs}
With the rise of LLMs, adversarial concepts have resurfaced with a focus on robustness and alignment rather than generation from scratch. Red Teaming \cite{perez2022red} employs an adversarial LLM to generate test cases that provoke harmful outputs from a target LLM, serving as a distinct "attacker." In the context of alignment, f-GAN formulations have been theoretically connected to divergence minimization in language models \cite{go2023aligning}.
Most closely related to our work is the concept of using a discriminator to guide decoding. Methods like Discriminator-Guided Generation train a classifier to rerank or guide the beam search of an LLM. However, these discriminators are usually small, separate models (e.g., BERT-based) or fixed reward models. SGALM differs by unifying the generator and discriminator into a single LLM that iteratively updates both roles, leveraging the high-level reasoning capabilities of modern LLMs to perform nuanced discrimination beyond simple binary classification.

\subsection{Text GAN}
GANs have achieved immense success in continuous domains like computer vision but have historically struggled with discrete text generation.
Early Text GANs. The discrete nature of text disrupts the gradient flow from the discriminator to the generator. Early solutions like SeqGAN \cite{yu2017seqgan}, MaliGAN \cite{che2017maximum}, and LeakGAN \cite{guo2018long} employed Reinforcement Learning (specifically Policy Gradient) to bypass the non-differentiable token generation. However, these models famously suffered from mode collapse and training instability \cite{zhang2017adversarial}, and generally underperformed compared to maximum likelihood estimation (MLE) in the pre-LLM era.

\section{Theoretical Analysis}

We provide a theoretical justification for the convergence properties of the SGALM framework. We analyze the minimax game played by the unified model $\theta$, where the discriminator attempts to distinguish the real data distribution $p_T$ from the generated distribution $p_\theta$, and the generator attempts to minimize this distinction.

We rely on the following standard assumptions for the analysis.
{Infinite Capacity:} The parameter space of $\theta$ has sufficient capacity to represent both the optimal discriminator function and the true data distribution;
{Differentiability:} The densities $p_\theta(z)$ and $p_T(z)$ continuous, and $p_\theta(z)$ is differentiable with respect to $\theta$;
{Optimization:} The discrimination and generation updates converge in their respective alternating steps.

\subsection{Optimal Discrimination by Posterior Probability}\label{app:t1}

We first analyze the behavior of the discrimination update, i.e., updating \eqref{eq:d1} with \eqref{eq:gd-d}. This process is mathematically equivalent to a binary classification task minimizing cross-entropy. It comes out the optimal discrimination function $D^*(z)$ is the posterior probability to be real.

\begin{proposition}
For a fixed generator distribution $p_{G}(z)$, the optimal discriminator $D^*(z)$ trained via \eqref{eq:d1} is:
\begin{equation}
	D^*(z) = \frac{p_T(z)}{p_T(z) + p_G(z)}
\end{equation}
\end{proposition}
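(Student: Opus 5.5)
The plan is to follow the classical argument of \cite{goodfellow2014generative}. The discriminator objective in \eqref{eq:d1} (before negation) is
\begin{align*}
J(D)=\int \Big( p_T(z)\log D(z) + p_G(z)\log\big(1-D(z)\big) \Big)\, dz .
\end{align*}
With $p_G$ fixed, the generator distribution does not depend on $D$. This matches the detached copy $\theta^\dag$ used in the SGALM discrimination update. So maximizing $J$ over $D$ is an unconstrained problem over functions $D:\mathcal{Z}\to(0,1)$. By the Infinite Capacity assumption, any such function is realizable as $D_\theta$. Because $D$ is any continuous monotone function of $p^{\text{real}}_\theta$, it ranges over all of $(0,1)$, and the parametrization imposes no restriction.

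The key step is to maximize pointwise. The integrand at each $z$ depends only on the scalar $y=D(z)$, so it suffices to maximize
\begin{align*}
\phi(y)=a\log y + b\log(1-y), \qquad a=p_T(z),\ b=p_G(z),
\end{align*}
over $y\in(0,1)$ for each fixed $z$. When $a+b>0$, $\phi$ is strictly concave, since $\phi''(y)=-a/y^2-b/(1-y)^2<0$. Setting $\phi'(y)=a/y-b/(1-y)=0$ gives the unique maximizer $y^*=a/(a+b)$. Hence $D^*(z)=p_T(z)/(p_T(z)+p_G(z))$.

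The boundary cases need care. If $a=0<b$ or $b=0<a$, the supremum is approached as $y\to 0$ or $y\to1$, which agrees with the formula as a limit. Outside $\operatorname{supp}(p_T)\cup\operatorname{supp}(p_G)$ the integrand vanishes and $D^*$ is arbitrary, so the statement holds almost everywhere with respect to $p_T+p_G$.

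The main obstacle is justifying that pointwise maximization yields the maximizer of the integral. Pointwise optimality gives $J(D)\le J(D^*)$ for every admissible $D$, because the integrand is dominated pointwise. Measurability and continuity of $D^*$ follow from the continuity of the densities in the Differentiability assumption. Uniqueness holds a.e.\ on the support by strict concavity. I would also note that maximizing $J$ is exactly the negation of the $\min_\theta$ cross-entropy form in \eqref{eq:d1}, so the optimal $D_\theta$ reached by the update \eqref{eq:gd-d} is this $D^*$.
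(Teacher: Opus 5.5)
Your proposal is correct and follows the paper's proof: write $J(D)$ as an integral over $\mathcal{Z}$, maximize the integrand pointwise in $D(z)$, and solve the first-order condition $p_T(z)/D(z)-p_G(z)/(1-D(z))=0$. Your strict-concavity check, boundary cases, and pointwise-domination argument make explicit what the paper leaves implicit, with one small caveat: a continuous monotone map into $(0,1)$ need not be onto, so your claim that $D$ ranges over all of $(0,1)$ holds for $D=p^{\text{real}}_\theta$ (the case the paper uses) or a surjective transform, not for every admissible choice.
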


\begin{proof}
The discriminator $D(z)$ is derived from the probability of the token "Real". 
Training the model to distinguish between real samples $z \sim p_T$ and fake samples $z' \sim p_\theta$ corresponds to maximizing the standard binary log-likelihood objective $J(D)$:
\begin{equation}
	J(D) = \mathbb{E}_{z \sim p_T} [\log D(z)] + \mathbb{E}_{z' \sim p_G} [\log (1 - D(z'))]\nonumber
\end{equation}
We can rewrite this expectation in integral form over the data space $\mathcal{Z}$:
\begin{equation}
	J(D) = \int_{\mathcal{Z}} \left( p_T(z) \log D(z) + p_G(z) \log (1 - D(z)) \right) dz\nonumber
\end{equation}
To find the optimal $D(z)$ for any point $z$, we differentiate the integrand with respect to $D(z)$ and set the derivative to zero:
\begin{equation}
	\frac{p_T(z)}{D(z)} - \frac{p_G(z)}{1 - D(z)} = 0\nonumber
\end{equation}
Rearranging the terms yields:
\begin{align}
	D^*(z) &= \frac{p_T(z)}{p_T(z) + p_G(z)}\nonumber
\end{align}
\end{proof}
This result shows that through discrimination update, the model implicitly learns the real and generated data distributions and discriminate by the posterior probability.

\subsection{Optimal Generation from True Distribution}\label{app:t2}

Next, we show that performing generation update against above optimal discriminator leads to the recovery of the true data distribution.

\begin{proposition}
Given the optimal discriminator $D^*$, the global minimum of the generator objective is achieved if and only if the generated distribution matches the real distribution, i.e., $p_{G^*}(z) = p_T(z)$.
\end{proposition}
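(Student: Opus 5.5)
The plan follows the standard Goodfellow-style argument: substitute the optimal discriminator from Proposition~\ref{prop:discriminator} into the game value, then rewrite the result as a constant plus a nonnegative divergence. Fix any generator distribution $p_G$ and insert $D^*(z)=\nicefrac{p_T(z)}{p_T(z)+p_G(z)}$ into the full objective $J$ of \eqref{eq:gan}. The real-data term does not depend on $G$ directly, but it does through $D^*$. I will therefore work with the whole value $C(G)\equiv J(D^*,G)$ rather than only the term in \eqref{eq:g1}. Minimizing $C(G)$ is exactly ``minimizing the generator objective against the optimal discriminator'' in the bilevel sense. This gives
\begin{equation*}
C(G)=\int_{\mathcal{Z}} p_T(z)\log\frac{p_T(z)}{p_T(z)+p_G(z)}\,dz+\int_{\mathcal{Z}} p_G(z)\log\frac{p_G(z)}{p_T(z)+p_G(z)}\,dz .
\end{equation*}

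Next I add and subtract $\log 2$ inside each integral, using $\int p_T=\int p_G=1$. Writing $m=\tfrac12(p_T+p_G)$, this yields
\begin{equation*}
C(G)=-\log 4+\mathrm{KL}(p_T\,\|\,m)+\mathrm{KL}(p_G\,\|\,m)=-\log 4+2\,\mathrm{JSD}(p_T\,\|\,p_G).
\end{equation*}

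For the ``if'' direction, set $p_G=p_T$. Then $D^*\equiv\tfrac12$ and $C(G)=-\log 4$. For the ``only if'' direction, Gibbs' inequality gives $\mathrm{JSD}\ge 0$, so $-\log 4$ is the global minimum. Moreover, $\mathrm{JSD}(p_T\|p_G)=0$ forces $\mathrm{KL}(p_T\|m)=0$, hence $p_T=m$ almost everywhere, hence $p_G=p_T$ almost everywhere. The continuity assumption on the densities upgrades this to equality everywhere. The infinite-capacity assumption guarantees that $p_T$ is attainable as $p^G_\theta$, so the infimum is actually achieved.

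The main obstacle is interpretive rather than computational. In SGALM the generator update \eqref{eq:g1} uses only the fake-sample term and detaches the discriminator. Minimizing $\mathbb{E}_{p_G}[\log(1-D^*)]$ with $D^*$ held fixed does not by itself yield the JSD characterization: that one-sided term is minimized by concentrating mass where $p_T/p_G$ is large. I would address this by stating explicitly that the statement concerns the value $J(D^*_G,G)$, where $D^*$ is re-optimized for each $G$ as in the alternating scheme. I would then note that the alternating updates, under the optimization assumption, are a descent procedure on $C(G)$. This is the same convention used in the original GAN analysis, which the paper cites. A secondary technical point is justifying $\int p_G\log p_G$ finiteness and the a.e.\ conclusion, which I will handle by appeal to the density assumptions.
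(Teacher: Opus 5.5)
Your proposal is correct and follows the paper's proof essentially verbatim. Like the paper, you substitute $D^*$ into the full value (both the real and the fake terms), rewrite it as $-\log 4 + 2\,\mathrm{JSD}(p_T\|p_G)$, and conclude from $\mathrm{JSD}\ge 0$ with equality exactly when $p_G=p_T$. Your explicit choice to minimize $C(G)=J(D^*_G,G)$ rather than the one-sided term in \eqref{eq:g1} is exactly what the paper does implicitly, since it includes $\mathbb{E}_{p_T}[\log D^*]$ without comment. Your remarks on a.e.\ equality and attainability add rigor the paper omits; the finiteness worry is unnecessary, because $p_T/m,\,p_G/m\le 2$ bounds each KL term by $\log 2$.
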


\begin{proof}
The generator aims to minimize the discriminator's ability to distinguish fake samples, which is equivalent to minimizing the value function $J(G | D^*)$. Substituting the optimal discriminator $D^*(z)$ into the objective:
\begin{align}
	J(G|D^*) &= \mathbb{E}_{z \sim p_T} \left[\log \frac{p_T(z)}{p_T(z) + p_G(z)}\right] \nonumber \\
	&+ \mathbb{E}_{z \sim p_G} \left[\log \frac{p_G(z)}{p_T(z) + p_G(z)}\right]\nonumber
\end{align}
This expression relates to the Jensen-Shannon Divergence ($JSD$). By factoring out $-\log 4$, we obtain:
\begin{equation}
	J(G|D^*)= -\log 4 + 2 \cdot \text{JSD}(p_T \| p_G)\nonumber
\end{equation}
Since the Jensen-Shannon Divergence is non-negative ($\text{JSD}(P\|Q) \geq 0$) and equals zero if and only if $P = Q$, the global minimum of the generator objective is achieved exactly when:
\begin{equation}
	p_{G^*}(z) = p_T(z)\nonumber
\end{equation}
\end{proof}
Thus, the adversarial feedback from $D^*$ forces the generator $p_{G^*}$ to converge to the true data distribution $p_T$.

\subsection{From Few-Shot Generation to Zero-Shot Understanding}\label{app:t3}

\begin{theorem}
The converged SGALM
$p_{\theta^*}(\cdot)=p_T(\cdot)$.
\end{theorem}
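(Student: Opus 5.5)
We combine the two propositions with Assumption~\ref{assumption}. The argument moves from the few-shot generator distribution $p^G_{\theta^*}$ to the zero-shot distribution $p_{\theta^*}$. Under the Optimization assumption, the alternating updates converge to a pair $(D^*,\theta^*)$ that is a fixed point of both steps. At that point, Proposition~\ref{prop:discriminator} gives $D^*=p_T/(p_T+p^G_{\theta^*})$. Proposition~\ref{prop:generator} then gives $p^G_{\theta^*}(\cdot)=p_T(\cdot)$, where the Infinite Capacity assumption guarantees that this minimizer is representable. Here $p^G_{\theta^*}$ must be read as the generation distribution averaged over the random context $Z_{\text{ctx}}$ drawn from $\mathcal{D}$, since that is the distribution the discriminator actually sees. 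The first step of the plan is therefore to record that
\[
\mathbb{E}_{\{z^i\}\sim p_T}\bigl[p_{\theta^*}(\cdot\mid G_\text{Prompt},\{z^i\}_{i=1}^n)\bigr]=p_T(\cdot).
\]

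The second step unfolds each conditional via Assumption~\ref{assumption}: $p_{\theta^*}(\cdot\mid\{z^i\})=p_{\hat z^*(\{z^i\})}(\cdot)$, where $\hat z^*$ is the MAP domain. The mixture identity above then states that a mixture of domain distributions $p(\cdot\mid\hat z)$, weighted by the law of $\hat z^*$ under i.i.d.\ contexts from $p_T$, equals $p_T$. I would argue that the law of $\hat z^*$ must concentrate on a domain $\hat z_T$ with $p(\cdot\mid\hat z_T)=p_T$. The reason is that $p_T$ is itself a realizable domain under the capacity assumption, and it maximizes the likelihood $\prod_i f_{z^i}(\hat z)$ in expectation, since it minimizes $\mathrm{KL}(p_T\|p(\cdot\mid\hat z))$. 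If the prior $p(\hat z\mid\theta^*)$ put mass elsewhere, the MAP would be a biased domain, and the mixture could not equal $p_T$ exactly. Continuity of $p(\cdot\mid\hat z)$ in $\hat z$ is used here to make ``the mixture equals $p_T$'' force the selected domains to coincide with $p_T$, rather than averaging to it.

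The third step passes to zero-shot. With $n=0$ examples, Assumption~\ref{assumption} gives $p_{\theta^*}(\cdot)=p_{\hat z_0}$, where $\hat z_0=\arg\max_{\hat z}p(\hat z\mid\theta^*)$ is the prior mode. So I must show $\hat z_0=\hat z_T$. Suppose the prior mode were a different domain $\hat z_0$. For contexts whose likelihood ratio between $\hat z_T$ and $\hat z_0$ is weak, the MAP would return $\hat z_0$. Such contexts have positive probability for small $n$ (e.g.\ $n=4$), so by continuity a positive-mass set of contexts would yield a generator different from $p_T$. That contradicts step two. Hence the prior mode is $\hat z_T$, and $p_{\theta^*}(\cdot)=p_T(\cdot)$. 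The conditional statement after the theorem then follows from the autoregressive factorization.

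The main obstacle is step two and its use in step three: turning equality of a mixture into equality of every component. This is an identifiability claim that does not follow from Assumption~\ref{assumption} alone. The mixture could in principle average biased domains into $p_T$. I would first attempt the following route. Note that fine-tuning adjusts only the prior $p(\hat z\mid\theta)$, and the capacity assumption allows it to be a point mass at $\hat z_T$. Then argue that the generator step strictly decreases the JSD whenever prior mass moves toward $\hat z_T$, so at the global optimum the prior is concentrated there. If this fails, I would add an explicit identifiability assumption on the family $\{p(\cdot\mid\hat z)\}$.
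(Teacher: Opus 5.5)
\paragraph{Overall.} The proposal has genuine gaps. Step~2 is unproved and your fallback cannot prove it. Step~3 fails without an added hypothesis. Details follow.

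\paragraph{Step 2.} This is a genuine gap, as you suspect, and the repair you sketch cannot close it. Your context-averaged reading of $p^G_{\theta^*}$ is the faithful one, since the discriminator sees the mixture over contexts. Under that reading, the generator objective is already at its global minimum ($\mathrm{JSD}=0$) as soon as the MAP domain equals $\hat z_T$ for $p_T$-almost every context. This holds however much mass $p(\hat z\mid\theta^*)$ puts elsewhere. So optimality of the generator step cannot force the prior to concentrate, and ``JSD strictly decreases as prior mass moves toward $\hat z_T$'' is false in exactly the regime you need.

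The paper does not derive this identifiability either. Its proof \emph{starts} from the per-context identity $p_{\hat z^*(\{z^i\})}(\cdot)=p_T(\cdot)$ for every context $\{z^i\}$ drawn from $p_T$, of any size, attributed to Proposition~\ref{prop:generator} together with Assumption~\ref{assumption}. It then argues in three moves:
\begin{enumerate}
\item It shows $\hat z^*=T$ is the unique MAP, via a separation bound $\prod_i p(z^i\mid\hat z)/p(z^i\mid T)\le\gamma^n$ and large $n$.
\item It argues $p(\hat z\mid\theta^*)=\delta_T$ by contradiction, using one-example contexts $\{z^j\}$.
\item It reads off the zero-shot case as the prior mode.
\end{enumerate}
If you adopt that per-context premise, your step~2 disappears. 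Note that the premise is an extra assumption, not a consequence of Proposition~\ref{prop:generator}.

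\paragraph{Step 3.} This step fails even with the per-context premise. Your contradiction needs a set of contexts of positive $p_T$-probability on which $\prod_i p(z^i\mid\hat z_0)/p(z^i\mid\hat z_T)$ exceeds the prior odds $p(\hat z_T\mid\theta^*)/p(\hat z_0\mid\theta^*)$. Nothing in the assumptions provides one.

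Here is a counterexample. Take $p(\cdot\mid\hat z_T)=p_T$ and $p(\cdot\mid\hat z_0)=\tfrac12 p_T+\tfrac12 q$, with $q$ a continuous density supported off $\operatorname{supp}(p_T)$. Put prior mass $0.52$ on $\hat z_0$ and $0.48$ on $\hat z_T$.
\begin{itemize}
\item On $\operatorname{supp}(p_T)$ the likelihood ratio is $2^{-n}\le\tfrac12<0.48/0.52$.
\item Hence every nonempty context returns $\hat z_T$, and the few-shot generator equals $p_T$ context by context. The game is therefore at its equilibrium.
\item Yet the zero-shot output is $p(\cdot\mid\hat z_0)\ne p_T$.
\end{itemize}

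The paper's corresponding step asserts that continuity of $p(z\mid\hat z)$ yields some $z^j\in\operatorname{supp}(p_T)$ whose ratio beats the prior odds. This example has continuous densities and satisfies the paper's $\gamma^n$ bound with $\gamma=\tfrac12$. So continuity does not supply such a $z^j$.

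\paragraph{What you need.} You need an explicit additional hypothesis, not a sharper argument. One option: every $\hat z\ne T$ carrying prior mass has $p(\cdot\mid\hat z)>p_T$ on a set of positive $p_T$-measure. This is automatic if the support of $p(\cdot\mid\hat z)$ lies inside $\operatorname{supp}(p_T)$ and the two densities differ.

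With that hypothesis and the per-context premise, your mode argument goes through. If some $\hat z_0\ne\hat z_T$ were the prior mode, the odds would be at most $1$. Contexts whose four examples all land in that set then have positive probability and push the MAP off $\hat z_T$.
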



\begin{proof}
From Proposition~\ref{prop:generator} and Assumption~\ref{assumption}, we have
\begin{align}
	\forall \{z^i\mid z^i\sim p_T(z)\},~p_{\hat{z}^*}(\cdot)=p_T(\cdot).\label{eq:o}
\end{align}
We first prove \eqref{eq:o} holds i.i.f. $\hat{z}^*=T$, i.e., $\hat{z^*}$ is unique.
From definition, we have $\forall \hat{z}\neq T,$
$$ \exists \gamma\in(0,1), \frac{\prod_{i=1}^np(z^i\mid \hat{z})}{\prod_{i=1}^np(z^i\mid T)}\leq \gamma^n.$$
So $\forall n>\log_\gamma \frac{p(T\mid\theta^*) }{p( \hat{z}\mid\theta^*) }$,
\begin{align}\nonumber
	\frac{\prod_{i=1}^np(z^i\mid \hat{z})p(\hat{z}\mid\theta^*)}{\prod_{i=1}^np(z^i\mid T)p(T\mid\theta^*)}
	\leq\gamma^n \frac{p(\hat{z}\mid\theta^*)}{p(T\mid\theta^*)}
	<\frac{p(T\mid\theta^*) }{p( \hat{z}\mid\theta^*) }\frac{p(\hat{z}\mid\theta^*)}{p(T\mid\theta^*)}=1,
\end{align}
which means ${\hat{z}}\neq\hat{z}^*=\arg\max_{{\hat{z}}} \prod_{i=1}^nf_{z^i}(\hat{z})p(\hat{z}\mid\theta^*)$. So $\hat{z}^*=T$ is unique.

Now with $\forall  \{z^i\mid z^i\sim p_T(z)\}, p_{\hat{z}^*}(\cdot)=p_T(\cdot)$ and  $\hat{z}^*=T$, we prove $p(\hat{z}\mid\theta^*)=\delta_T(\hat{z})$, where $\delta_T$ is the Dirac delta function centered at $T$.
If $\exists \hat{z}\neq T, p(\hat{z}\mid \theta^*)>0$, then as $p(z\mid\hat{z})$ is continuous, $\exists z^j \in \text{supp} (p_T(z))$, 
\begin{align}\nonumber
	\frac{p(z^j\mid\hat{z})}{p(z^j\mid T)}&>\frac{p(T\mid\theta^*)}{p(\hat{z}\mid\theta^*)},\\\nonumber
	p(z^j\mid\hat{z})p(\hat{z}\mid\theta^*)&>p(z^j\mid T)p(T\mid\theta^*).
\end{align}
This means given $\{z^j\}$, which is possible to be drawn from $p_T(z)$, $\hat{z^*}=\arg\max_{{\hat{z}}} f_{z^j}(\hat{z})p(\hat{z}\mid\theta^*)\neq T$, which is contradictory to previous result. So $\forall \hat{z}\neq T, p(\hat{z}\mid \theta^*)=0$, which means $p(\hat{z}\mid\theta^*)=\delta_T(\hat{z})$.

Finally, with no examples (zero-shot $\emptyset$),
\begin{align}\nonumber
	p_{\theta^*}(\cdot)=p_{\theta^*}(\cdot\mid\emptyset)=p_{\arg\max_{{\hat{z}}}p(\hat{z}\mid\theta^*)}(\cdot)\\\nonumber
	=p_{\arg\max_{{\hat{z}}}\delta_T(\hat{z})}(\cdot)=p_T(\cdot).
\end{align}
\end{proof}

\section{Experiments}

\subsection{Implementation Details}\label{app:imp}
Here we provide implementation details and important hyperparameter settings in our experiments.
 For all iterative methods in our experiment, including SGALM, Self-Rewarding, SPIN, Iterative-RS, Iterative-RPO, in each iteration, we generate synthetic samples (pairs) with the same number as the true training set (7.47/1.12/0.12k). And train the model for one epoch per iteration, with linear decaying learning $lr_{start}=0.5\times lr_{end}$ with empirically best $lr_{start}$ chosen from $\{1\times10^{-6},5\times10^{-7},3\times10^{-7}\}$. Batch size is set to $64$. Except SFT and Self-Instruct are optimized with AdamW (standard supervised fine-tuning, default), the other methods are optimized with RMSProp (rewarding the policy, following \cite{rafailov2023direct}).
 
For the generation in SGALM, and all ICL-generation involved in other baselines, we randomly select 4-shot examples from training set as provided $Z_{\text{ctx}}$.
All ICL-generation is provided with 4-shot examples.

\subsection{Training Cost}\label{app:cost}
On 4 $\times$ NVIDIA A100 80G GPU, each iteration for GSM8K takes 0.4 hours generating and 1.1 hours training; each iteration for ARC takes 0.1 hours generating, 0.3 hours training; each iteration for MBPP takes 0.1 hours generating, 0.3 hours training.

\subsection{More Results}
\begin{figure}[ht]
	\centering
	\includegraphics[width=0.45\textwidth]{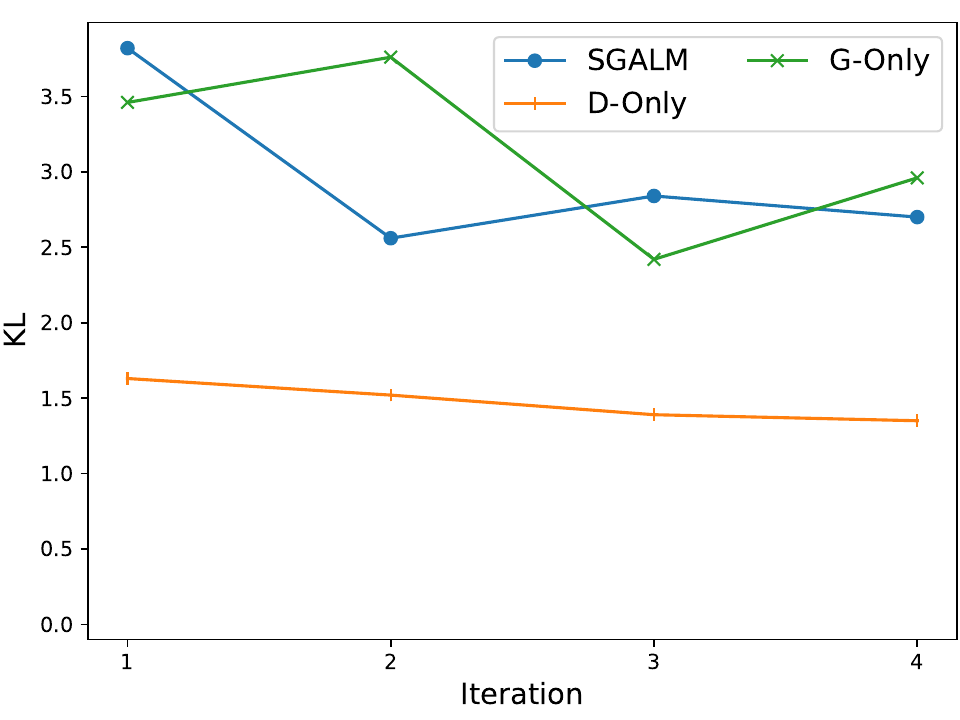}
	\caption{$\text{KL}(p^G_{\theta^i}||p^G_{\theta^{i-1}})$ for each iteration $i$.}
	\label{fig:kl}
\end{figure}
\subsection{Generation Distribution}\label{app:kl}
First, we show the KL-divergence between generation distribution of two iterations $i$ and $i-1$, i.e., $\text{KL}(p^G_{\theta^i}||p^G_{\theta^{i-1}})$, shown in Figure~\ref{fig:kl}.
The KL-divergence is estimated by $\frac{1}{N}\sum_{z\sim p^G_{\theta^{i-1}}}^{N}\frac{p^G_{\theta^{i}}(z)}{p^G_{\theta^{i-1}}(z)}\log\frac{p^G_{\theta^{i}}(z)}{p^G_{\theta^{i-1}}(z)}-\frac{p^G_{\theta^{i}}(z)}{p^G_{\theta^{i-1}}(z)}+1$.
We can find that D-only, although without explicit generation objective, also updates the generation distribution obviously.
There are two reasons. First, generation and discrimination are achieved by the same group of parameter $\theta$, updating $\theta$ would certainly affect the generation distribution. Second, discrimination better would  lead to better understanding and generation in human-like general intelligence, which is exactly the property SGALM aims to utilize, supported by the fact that $p_\theta^\text{real}(z')$ does increase after iteration 1 shown in Figure~\ref{fig:p}.

\subsection{Generation and Discrimination Cases}
Here we provide some cases about the generation and discrimination.

\subsubsection{Prompt Template}
First,we provide the exact prompt we use in SGALM (specifically for Qwen2.5-3B-Instruct).
The generation prompt is provided as follows, where {\color{blue}blue} means official chat template provided by Qwen2.5-3B-Instruct,
 {\color{brown}brown} means the prompt we set for SGALM shared across all data, and the others (black) are specific data samples.

\begin{Verbatim}[breaklines=true, breakanywhere=true,breaksymbol=,commandchars=\\\{\}]
\color{blue}{<|im_start|>system}
\color{brown}{You are a helpful assistant. Follow the user's examples to complete the task.\color{blue}{<|im_end|>}
<|im_start|>user}
\color{brown}{Here are 4 examples. Please follow the pattern and provide a new example.


### Example 1:}
Q: The bakery has 8 indoor tables and 12 outdoor tables. Each indoor table has 3 chairs and each outdoor table has 3 chairs. How many chairs are there in total?
A: Let's think step by step. There are 8 x 3 = <<8*3=24>>24 indoor chairs.
There are 12 x 3 = <<12*3=36>>36 outdoor chairs.
In total there are 24 + 36 = <<24+36=60>>60 chairs
#### 60

\color{brown}{### Example 2:}
Q: Ken had fifty pencils, and he wanted to share some of them with his two friends, Manny and Nilo. Ken gave ten pencils to Manny and ten more pencils to Nilo than he gave to Manny. He kept the rest of the pencils. How many pencils did Ken keep?
A: Let's think step by step. Nilo received 10 + 10 = <<10+10=20>>20.
Thus, Ken gave a total of 10 + 20 = <<10+20=30>>30.
Therefore, Ken kept 50 - 30 = <<50-30=20>>20 pencils.
#### 20

\color{brown}{### Example 3:}
Q: A small store made 50 posters to advertise their products. Two-fifths of them are small posters while half of them are medium posters. The rest are large posters. How many large posters are there?
A: Let's think step by step. 50 x 2/5 = <<50*2/5=20>>20 posters are small.
And 50/2 = <<50/2=25>>25 posters are medium.
So 20 + 25 = <<20+25=45>>45 posters are not large.
Therefore, 50 - 45 = <<50-45=5>>5 posters are large.
#### 5

\color{brown}{### Example 4:}
Q: Earl started delivering newspapers on the first floor of a condominium building. He then went up 5 floors then went down 2 floors. He again went up 7 floors and found that he is 9 floors away from the top of the building. How many floors does the building have?
A: Let's think step by step. Earl was on the 1 + 5 = <<1+5=6>>6th floor after going up 5 floors.
When he went down 2 floors, he was on the 6 - 2 = <<6-2=4>>4th floor.
Since he went up 7 floors, he was then on the 4 + 7 = <<4+7=11>>11th floor.
Since he is 9 floors away from the top of the building, therefore the building has 11 + 9 = <<11+9=20>>20 floors.
#### 20

\color{brown}{### New Example:}
\color{blue}{<|im_end|>
<|im_start|>assistant}
\end{Verbatim}

And the discrimination prompt is as follows.
\begin{Verbatim}[breaklines=true, breakanywhere=true,breaksymbol=,commandchars=\\\{\}]
\color{blue}{<|im_start|>system}
\color{brown}{You are a content detector. Follow the user's instruction to analyze the given text to determine if it is Real (human-written) or Fake (LLM-generated). Answer with exactly one word: 'Real' (if human-written) or 'Fake' (if LLM-generated).\color{blue}{<|im_end|>}
<|im_start|>user}
Q: Francie saves up her allowance for several weeks. She receives an allowance of $5 a week for 8 weeks. Then her dad raises her allowance, and she receives $6 a week for 6 weeks. Francie uses half of the money to buy new clothes. With the remaining money, she buys a video game that costs $35. How much money does Francie have remaining after buying the video game?
A: Let's think step by step. When her allowance is $5 a week, Francie gets a total of $5 * 8 = $<<5*8=40>>40
When her allowance is $6 a week, Francie gets a total of $6 * 6 = $<<6*6=36>>36
The total amount of money Francie gets is $40 + $36 = $<<40+36=76>>76
After purchasing new clothes, she has $76 / 2 = $<<76/2=38>>38 remaining
After buying the video game, Francie has $38 - $35 = $<<38-35=3>>3 remaining
#### 3

\color{brown}{Is above text Real (human-written) or Fake (LLM-generated)? Answer with exactly one word: 'Real' (if human-written) or 'Fake' (if LLM-generated).}
\color{blue}{<|im_end|>
<|im_start|>assistant}
\end{Verbatim}

\subsubsection{Generation and Discrimination Cases}\label{app:case}
We show case study on GSM8K. We use iteration 0 model (original Qwen2.5-3B-Instruct) and iteration 4 model to generate and discriminate 320 cases respectively. We also provide discrimination cases on the real samples (seen in train set).

For iteration 0 model, the generated/real cases  has scores ($p_\theta^\text{real}$) with mean $0.9570/0.9816$, median $1.0/1.0$, std $0.1764/0.1060$, minimum $0.0052/0.0000$, maximum $1.0/1.0$.
One generated case with highest score ($1.0$) is
\begin{Verbatim}[breaklines=true, breakanywhere=true,breaksymbol=]
Q: Sarah earns $15 for each hour she works at her job. She also gets a $20 weekly bonus. If Sarah works 40 hours in a week, how much money will she earn including her bonus?\n\nLet's think step by step. Sarah earns $15 per hour and works 40 hours, so her earnings from hourly work would be 40 * 15 = $<<40*15=600>>600.\nAdditionally, she receives a $20 weekly bonus.\nTherefore, the total amount Sarah earns is 600 + 20 = $<<600+20=620>>620.\n#### 620
\end{Verbatim}
The generated case with lowest score ($0.0052$) is
\begin{Verbatim}[breaklines=true, breakanywhere=true,breaksymbol=]
Q: A jar contains 50 red marbles and 30 blue marbles. If a marble is randomly selected from the jar, the probability that it is blue is twice the probability that it is red. How many additional blue marbles need to be added to the jar so that the probability of selecting a blue marble becomes equal to the probability of selecting a red marble?\n\nA: Let's think step by step. Initially, there are 50 red marbles and 30 blue marbles, making a total of 50 + 30 = 80 marbles.\n\nLet \\( x \\) be the number of additional blue marbles to be added.\n\nAfter adding \\( x \\) blue marbles, the total number of marbles becomes \\( 80 + x \\).\n\nThe number of blue marbles will be \\( 30 + x \\).\n\nGiven that the probability of selecting a blue marble is twice the probability of selecting a red marble, we can set up the following equation:\n\n\\[\n\\frac{30 + x}{80 + x} = 2 \\times \\frac{50}{80 + x}\n\\]\n\nSimplify the right-hand side:\n\n\\[\n\\frac{30 + x}{80 + x} = \\frac{100}{80 + x}\n\\]\n\nCross-multiply to solve for \\( x \\):\n\n\\[\n(30 + x)(80 + x) = 100(80 + x)\n\\]\n\nExpand both sides:\n\n\\[\n2400 + 30x + 80x + x^2 = 8000 + 100x\n\\]\n\nCombine like terms:\n\n\\[\nx^2 + 110x + 2400 = 8000 + 100x\n\\]\n\nSubtract \\( 8000 + 100x \\) from both sides:\n\n\\[\nx^2 + 110x + 2400 - 8000 - 100x = 0\n\\]\n\nSimplify:\n\n\\[\nx^2 + 10x - 5600 = 0\n\\]\n\nSolve this quadratic equation using the quadratic formula \\( x = \\frac{-b \\pm \\sqrt{b^2 - 4ac}}{2a} \\), where \\( a = 
\end{Verbatim}
A real case with very low score $1.583\times10^{-4}$ is
\begin{Verbatim}[breaklines=true, breakanywhere=true,breaksymbol=]
Q: By the time Anne is two times as old as Emile, Emile will be six times as old as Maude. If Maude will be 8 years old, how old will Anne be?\nA: Let's think step by step. If Maude's age is 8 by the time Anne's age is four times Emile's age, Emile will be six times as old as Maude, which totals 6*8 = 48 years.\nIf Emile's age is 48 years old by the time Anne's age is twice her number, Anne will be 2*48 = <<48*2=96>>96 years.\n#### 96
\end{Verbatim}
Another real case with very low score $1.230\times10^{-4}$ is
\begin{Verbatim}[breaklines=true, breakanywhere=true,breaksymbol=]
Q: Jenny leaves her house at 8:00 with some cupcakes for her daughter's birthday party at school. She jogs to the school at 15 miles per hour. Half an hour later, her wife Anna realizes that Jenny used peanut butter frosting because she forgot one of the kids in the class had a severe peanut allergy. Jenny doesn't have her cell phone, so Anna leaves the house driving at 45 miles her hour to catch Jenny and warn her. How long does Anna spend traveling in minutes?\nA: Let's think step by step. We know that the time that Anna spends traveling is 20 minutes less than Jenny's travel time: a = j - 20\nWe also know that the distance both people travel is equal, so their speeds times their travel times must be equal: 15j = 45a\nSubstituting the first equation into the second equation, we get 15j = 45(j - 20)\nMultiplying through the parentheses we get 15 j = 45j - 900\nSubtracting 45 j from both sides, we get -30j = -900\nDividing both sides by -30, we get j = 30\nSince Anna travels 20 minutes less than Jenny, she traveled 30 minutes - 20 minutes = <<20-10=10>>10 minutes\n#### 10
\end{Verbatim}

For iteration 4 model, the generated/real cases  has scores ($p_\theta^\text{real}$) with mean $0.0699/0.9434$, median $0.0179/0.9922$, std $0.1245/0.1282$, minimum $0.0000/0.0015$, maximum $0.9062/1.0$.
The generated case with highest score ($0.9062$) is
\begin{Verbatim}[breaklines=true, breakanywhere=true,breaksymbol=]
Q: A school garden club plants flowers in rows. They planted 5 rows of tulips with 6 tulips in each row last week. This week, they decided to increase the number of rows by 2 but kept the same number of tulips per row. How many tulips will they plant this week?\n\nLet's think step by step. Last week, the club planted 5 rows of tulips with 6 tulips in each row, so they planted 5 * 6 = <<5*6=30>>30 tulips.\nThis week, they increased the number of rows by 2, so they will have 5 + 2 = <<5+2=7>>7 rows.\nThey still plan to plant the same number of tulips per row, which is 6 tulips.\nTherefore, this week they will plant 7 * 6 = <<7*6=42>>42 tulips.\n#### 42
\end{Verbatim}
The generated case with lowest score ($4.7981\times10^{-6}$) is
\begin{Verbatim}[breaklines=true, breakanywhere=true,breaksymbol=]
Q: A factory produces 3 types of widgets: Type A, Type B, and Type C. The production ratio of Type A to Type B to Type C is 5:3:2. If the factory produced a total of 360 widgets today, how many more Type C widgets must be added to reach a production ratio of 5:3:4?\n\nLet's think step by step. First, we determine the total parts of the original ratio: 5 + 3 + 2 = 10 parts.\nGiven that the total production is 360 widgets, each part corresponds to 360 / 10 = <<360/10=36>>36 widgets.\nFor the original ratio of 5:3:2, the number of Type A, Type B, and Type C widgets produced are:\nType A: 5 * 36 = <<5*36=180>>180\nType B: 3 * 36 = <<3*36=108>>108\nType C: 2 * 36 = <<2*36=72>>72\n\nTo achieve a new ratio of 5:3:4, let's denote the new number of Type C widgets as \\(x\\). This means the number of Type A and Type B widgets will remain the same (180 and 108 respectively), and the new ratio equation will be:\n\\[ \\frac{180}{108} = \\frac{180}{x} = \\frac{108}{x} \\]\n\nWe know that the new ratio should be 5:3:4, so:\n\\[ \\frac{180}{72+x} = \\frac{180}{x} = \\frac{108}{x} \\]\n\nSolving for \\(x\\):\n\\[ \\frac{180}{72+x} = \\frac{180}{x} \\]\n\\[ 180x = 180(72 + x) \\]\n\\[ 180x = 180 \\times 72 + 180x \\]\n\\[ 180x = 12960 + 180x \\]\n\\[ 0 = 12960 \\]\n\nIt seems there was a m
\end{Verbatim}
A real case with very low score $0.0015$ is
\begin{Verbatim}[breaklines=true, breakanywhere=true,breaksymbol=]
Q: There are twice as many cows in Devonshire as there are hearts on a standard deck of 52 playing cards. If there are 4 hearts on a card, calculate the total cost of the cows when they are sold at $200 each.\nA: Let's think step by step. If there are 4 hearts on a card, a standard deck of 52 playing cards will have 208 hearts.\nThere are twice as many cows in Devonshire as there are hearts on a standard deck of 52 playing cards, meaning there are 2*208 = <<2*208=416>>416 cows in Devonshire.\nThe total cost of the cows when they are sold at $200 each is 416*$200 =$<<416*200=83200>>83200\n#### 83200
\end{Verbatim}
Another real case with very low score $0.0019$ is
\begin{Verbatim}[breaklines=true, breakanywhere=true,breaksymbol=]
Q: Porter is a painter who creates beautiful paintings of wild animals.  Most recently, he painted a mural of a pack of wolves standing on a snowy-white mountainside underneath a full moon.  He put the painting up for auction and it sold for $1000 less than five times more than he had made on his previous painting.  If he received $44,000 for the sale of his most recent painting, how much, in dollars, did he make for selling his previous painting?\nA: Let's think step by step. If $44,000 is $1000 less than five times more than he had made on his previous painting, then $44,000+$1000 = $45,000 is five times what he made on his previous painting.\nIf $45,000 is five times what he made for his previous painting, then he made $45,000/5 = $<<45000/5=9000>>9,000 for selling his previous painting.\n#### 9,000
\end{Verbatim}

\subsection{No Mode Collapse Phenomenon Observed}\label{app:collapse}
Serious mode collapse, that two generated samples are identical, has not been observed in SGALM. One phenomenon to note is that the generated sample is likely to share a few common prefixes. For example, $28\%$ of generated samples (iteration 4 model) start with ``\texttt{Q: A bakery}", but as the context grow longer, they gradually become distinguishable: $10\%$ of generated samples start with ``\texttt{Q: A bakery makes}" and $1.25\%$ of generated samples start with ``\texttt{Q: A bakery makes cupcakes}". This could be explained as the consequence of the common pattern of the few-shot examples from the same training set, and memorized knowledge in the model. Because the iteration 0 model also shows similar phenomenon.

Furthermore, this phenomenon provides strong empirical support for Assumption~\ref{assumption} (ICL Capacity), which posits that the model performs Bayesian inference to identify the target domain distribution from few-shot examples. The frequent occurrence of common prefixes reflects the model’s robust extraction of the shared prior knowledge and structural patterns inherent in the provided examples $Z_{ctx}$. Essentially, the model effectively "locks on" to the specific domain definition (the common prefix) dictated by the prompt ($p(\hat{z}\mid\theta)$). Crucially, the subsequent divergence of the sequences confirms that the model is not simply memorizing a single optimal path (mode collapse), but is instead sampling diverse trajectories related to the context-specific distribution ($\prod_{i=1}^nf_{z^i}(\hat{z})$).
once the context is established. This duality—rigid adherence to the domain pattern (prefix) coupled with flexible generation (suffix)—validates that ICL successfully serves as a mechanism for distribution recovery rather than mere imitation.

\end{document}